\newtheorem{prop}{Proposition}
\def\eqref#1{(\ref{#1})}
\newcommand{\bg}{{\mathbf g}}
\newcommand{\bx}{{\mathbf x}}
\newcommand{\bz}{{\mathbf z}}
\newcommand{\bA}{{\mathbf A}}
\newcommand{\bD}{{\mathbf D}}
\newcommand{\bG}{{\mathbf G}}
\newcommand{\bH}{{\mathbf H}}
\newcommand{\bL}{{\mathbf L}}
\newcommand{\bS}{{\mathbf S}}
\newcommand{\bN}{{\mathbf N}}
\newcommand{\bZ}{{\mathbf Z}}
\newcommand{\bX}{{\mathbf X}}
\newcommand{\bPhi}{{\boldsymbol \Phi}}
\newcommand{\bLambda}{{\boldsymbol \Lambda}}
\newcommand{\bTheta}{{\boldsymbol \Theta}}
\newcommand{\cB}{{\mathcal B}}
\newcommand{\cG}{{\mathcal G}}
\newcommand{\cS}{{\mathcal S}}
\newcommand{\cV}{{\mathcal V}}
\newcommand{\cW}{{\mathcal W}}
\def\bC{{\bf C}}
\def\bU{{\bf U}}
\def\minwrt[#1]{\underset{#1}{\text{minimize}}}
\def\maxwrt[#1]{\underset{#1}{\text{max}}}
\title{SlepNet: Spectral Subgraph Representation Learning for Neural Dynamics}
\author{%
  Siddharth Viswanath$^{\ast 1}$ \quad Rahul Singh$^{\ast 1,2}$ \quad Yanlei Zhang$^3$ \quad J. Adam Noah$^4$ \AND Joy Hirsch$^{2,4,6,8,9}$ \quad Smita Krishnaswamy$^{1,2,5,7}$ \\[0.5cm]
  $^1$Department of Computer Science, Yale University\; $^2$Wu Tsai Institute, Yale University \\ 
  $^3$Mila-Quebec AI Institute\; $^4$Department of Psychiatry, Yale University \\ 
  $^5$Department of Genetics, Yale School of Medicine\; $^6$Department of Neuroscience, Yale University\\
  $^7$Computational Biology and Bioinformatics Program, Yale University\\
  $^8$Department of Comparative Medicine, Yale University\\
  $^9$Department of Medical Physics and Biomedical Engineering, University College London\\
  [0.2cm]
  $^\ast$Equal Contribution \quad Correspondence: \texttt{smita.krishnaswamy@yale.edu}
}
\begin{document}

\maketitle

\begin{abstract}
Graph neural networks have been useful in machine learning on graph-structured data, particularly for node classification and some types of graph classification tasks. However, they have had limited use in representing patterning of signals over graphs. Patterning of signals over graphs and in subgraphs carries important information in many domains including neuroscience. Neural signals are spatiotemporally patterned, high dimensional and difficult to decode. Graph signal processing and associated GCN models utilize the graph Fourier transform and are unable to efficiently represent spatially or spectrally localized signal patterning on graphs. Wavelet transforms have shown promise here, but offer non-canonical representations and cannot be tightly confined to subgraphs. Here we propose SlepNet, a novel GCN architecture that uses Slepian bases rather than graph Fourier harmonics. In SlepNet, the Slepian harmonics optimally concentrate signal energy on specifically relevant subgraphs that are automatically learned with a mask. Thus, they can produce canonical and highly resolved representations of neural activity, focusing energy of harmonics on areas of the brain which are activated (i.e., visual cortex during movie watching). We evaluated SlepNet across three fMRI datasets, spanning cognitive and visual tasks, and two traffic dynamics datasets, comparing its performance against conventional GNNs and graph signal processing constructs. SlepNet outperforms the baselines in all datasets. Moreover, the extracted representations of signal patterns from SlepNet offers more resolution in distinguishing between similar patterns, and thus represent brain signaling transients as informative trajectories. Here we have shown that these extracted trajectory representations can be used for other downstream untrained tasks. Thus we establish that SlepNet is useful both for prediction and representation learning in spatiotemporal data.
\end{abstract}

\section{Introduction}
\label{sec:introduction}


Neural signals such as those measured in EEG, fMRI, or MEG exhibit complex spatiotemporally patterned structure. These signals form patterns over regions and subregions of the brain that evolve over time carrying information about cognition and function. Representing and decoding such signals necessitates a growing need for a neural network that can not only leverage structural information but also model how signals evolve and concentrate over specific subregions or regions. Graphs form a natural model for neural activity since they can model structural and functional connectivity. Researchers have therefore used graph neural networks in recent literature to analyze neural data~\cite{GadZhaPfe20,LuoWuYan24,MohKar24}. However message passing graph neural networks are designed mainly for deriving node representations, and certain kinds of graph representations. Very few works in the graph neural networks directly tackle the embedding of signals patterned over a graph. 

Graph signal processing~\cite{ShuNarFro13}, and associated graph convolutional networks like SpectralGCN~\cite{BruZarSzl14}, first order GCN~\cite{KipWel17} , BLISNet~\cite{xu2023blisnetclassifyinganalyzingsignals}, LEGS~\cite{Tong_LEGS}, offer a promising alternative. Existing GCN designs are based the graph Fourier transform (GFT)~\cite{IsuGamShu24}, which provides a principled framework for learning graph filters. However, since GFT is a global transform, it suffers from poor spatial localization and is not tuned to represent spatially resolved or localized phenomena within signals. Graph Wavelets~\cite{HamVan11,xu2023blisnetclassifyinganalyzingsignals} can capture localized signals, but often rely on non-canonical constructions that are not tuned to specific signal localization. Additionally, when analyzing a specific brain region- or subgraphs in general- graph wavelet-based methods suffer from information leakage at the boundaries of the subgraphs, where the signal energy intended to be localized within a subgraph spills over to its boundaries. These limitations highlight the need for a more principled approach to localized graph signal representations.

In order to address these limitations, we turn to graph Slepians~\cite{VilDemPre17, PetBolPre19}, which are alternative harmonics on a graph. Graph Slepians are functions that generalize classical Slepian functions~\cite{SlePol61,Sle78} to graph signal domains, incorporating the notions of subgraph selectivity and spectral bandwidth to reflect imposed constraints. The design of graph Slepians is based on an optimization criterion that expresses energy concentration of a band-limited graph signal in a predefined subgraph. Building on this foundation, we propose a novel GCN architecture called \textbf{SlepNet}, capable of learning both the subgraph and associated Slepian harmonics. Notably, SlepNet does not require subgraph of interest to be predefined, instead it learns both the relevant subgraph and the localized filters during training. This allows the model to focus on important functional regions of the graph, such as anatomically or functionally coherent regions of the brain, while also deriving rich representations of signals concentrated in those regions.  Moreover, to avoid the complexity of eigendecomposition for large graphs, we propose an efficient neural eigenmapping method.

We validate Slepnet using a 2-tiered evaluation system: 1) First, we train Slepnet on a classification task based on time series brain-activity data, and evaluate its classification accuracy as well as the accuracy of the learned mask, 2) Second, we extract the Slepian-based representation of the brain signals over time and evaluate the learned trajectory via visualization with T-PHATE as well as in the ability to perform an unrelated downstream classification using the highly resovled representation provided by the Slepian basis. For evaluation we use four datasets spanning neuroscience and real-world domains. The first two are fMRI datasets focused on obsessive compulsive disorder (OCD). One of them capture resting-state activity of participants with OCD and healthy controls after a perceptual and value-based decision making (PVDM) task and the other captures resting-state activity after a risk and ambiguity (RA) decision making task. The goal of our model on the two datasets is to predict if the subject has OCD or not. The third dataset is also an fMRI dataset that captures task-based activity while subjects with autism spectrum disorder (ASD) and healthy controls watched coherent and scrambled point light animations. The objective here is to perform a binary classification to determine whether the subject has ASD or not. Lastly, we used a real-world data set from the Caltrans Performance Measurement System (PeMS), which records 5-minute interval traffic measurements from road sensors across highways in California. The goal is to perform multi-class classification task to predict the day of the week from spatiotemporal traffic patterns. Our key contributions are:
\setlength{\leftmargini}{4pt}
\begin{itemize}
    \item \textbf{Slepian-based graph neural network:} We introduce a novel graph convolutional network architecture leveraging a key property of graph Slepian - optimally concentrate signal energy within a relevant learned subgraph while remaining spectrally bandlimited. This enables our model to learn important brain regions by focusing on signal representation within subgraphs.

    \item \textbf{Neural eigenmapping for scalable Slepian filter computation:} In order to address the computational cost of solving the constrained eigenvector problem at each step, we propose a neural eigenmapping method that learns to approximates the Slepian harmonics using supervised regression.

    \item \textbf{Validation of learned mask on ground truth data:} We show that the mask learned by SlepNet aligns with known embedded ground truth subgraphs, validating the model's ability to recover interpretable brain regions.

    \item \textbf{Two-tier evaluation on fMRI datasets:} We perform a two-tier evaluation of SlepNet across three fMRI datasets. In Tier 1, we assess the performance of subject-level classification, showing that SlepNet outperforms baselines. In Tier 2, we show that the embeddings extracted from SlepNet can be used for downstream analysis such as visualization and auxiliary classification, highlighting the expressivity and interpretability of the learned representations.
\end{itemize}

\section{Background}
\label{sec:background}

\subsection{Problem Definition}
\label{subsec:problem}
Let $\cG = (\cV,\bA)$ be a graph, where $\cV$ is the set of $N$ number of nodes, $\bA$ is the (weighted) adjacency matrix of the graph. The input node features are represented as a matrix $\bX \in \mathbb{R}^{N \times F} $ with $\bx_i \in \mathbb{R}^N$ (column of $\bX$) representing the $i^{th}$ feature channel of $\bX$ and $F$ denotes the total number of feature channels. The goal of graph convolutional networks (GCNs) is to learn expressive node representations in a data-driven manner. Conventional GCNs were rooted from graph Fourier transform (GFT) providing notion of frequency on graphs, which has been studied under the umbrella of graph signal processing.

\subsection{Graph Signal Processing and Graph Convolution Networks}
\label{subsec:GSP}
Graph signal processing (GSP)~\cite{ShuNarFro13, ManChaSin18, CheShiWri20} extends classical signal processing concepts and tools to graph-structured signals. GSP relates the vertex and spectral domains of a graph, much as classical signal processing connects the time and frequency domains of a time series. The eigenvalues and eigenvectors of the graph Laplacian provide a notion of frequency for signals defined on a graph. The graph Laplacian eigenvectors associated with low frequencies, vary slowly across the graph, i.e., if two vertices are connected by an edge, the values of the eigenvector at those locations are likely to be similar. The eigenvectors associated with larger eigenvalues oscillate more rapidly and are more likely to have dissimilar values on vertices connected by an edge. The graph Fourier transform and its inverse give us a way to equivalently represent a signal in two different domains: the vertex domain and the graph spectral domain. \\

The graph Fourier analysis relies on the spectral decomposition of graph Laplacians. The traditional combinatorial graph Laplacian is defined as $\bL = \bD - \bA$, with $\bD = \mathrm{diag}\{ d_1,d_2,\ldots, d_N \}$ and $d_i  = \sum_j A_{ij}$; the normalized version of the Laplacian is $\bL_{\mathrm{n}} = \bD^{-1/2} \bL \bD^{-1/2}$. The eigendecomposition of the graph Laplacian $\bL = \bU \bLambda \bU^T$, where $\bU \in \mathbb{R}^{N\times N}$ contains orthonormal eigenvectors as its columns and $\bLambda = \mathrm{diag}\{\lambda_1, \ldots, \lambda_N\}$ is a diagonal matrix of eigenvalues, is used to define graph Fourier transform (GFT) with eigenvectors of the graph Laplacian being the graph Fourier harmonics and the corresponding eigenvalues being the graph frequencies~\cite{ShuNarFro13}. With $\lambda_1 \leq \lambda_2 \leq \ldots \leq \lambda_N$, where $\lambda_1$ corresponds to the lowest (zero) frequency and $\lambda_N$ corresponds to the highest frequency of the graph. When using normalized Laplacian $\bL_{\mathrm{n}}$, the graph frequencies $\lambda_\ell \in [0,2]$, with $\lambda_1 = 0$.  

Let $\bx \in \mathbb{R}^{N}$ be a single-channel input graph signal, the GFT and the inverse GFT are defined as $\hat{\bx} =  \bU^T \bx$ and $\bx = \bU \hat{\bx}$, respectively.  Graph convolution of an input graph signal $\bx$ with a filter $\bg$ is 
\begin{equation}\label{eq:filter1}
    \bx * \bg := \bU \left( (\bU^T \bx) \odot (\bU^T \bg) \right) = \bU \hat{\bG} \bU^T \bx, 
\end{equation}
where $\hat{\bG} := \mathrm{diag} (\hat\bg)=\mathrm{diag}\{\hat{g}_1,  \ldots, \hat{g}_N\}$, and $\odot$ denotes element-wise multiplication.

The graph convolution operation given by \eqref{eq:filter1} is used in GNNs to learn filter coefficients in spectral domain. Spectral-GNN~\cite{BruZarSzl14} learns all the $N$ number of filter coefficients. For computational efficiency, the filter coefficients can be approximated via $M^{th}$ order polynomials of the graph frequencies ($M<<N$), which allows to write graph convolution in spatial domain via polynomials in the Laplacian. ChebNet~\cite{DefBreXav16} proposed to approximate the graph convolution via $M^{th}$ order Chebyshev polynomials. \citet{KipWel17} further simplified it by assuming first order polynomial filter. 


\begin{figure}[t]
\vspace{-0.1cm}
\centering
\begin{subfigure}[t]{0.45\textwidth}
\centering	
\includegraphics[scale=0.23]{./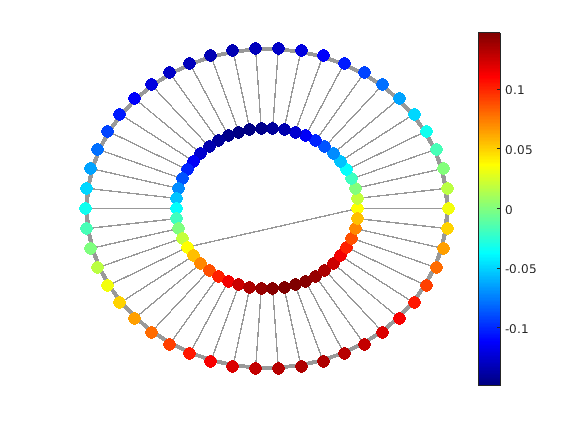}\includegraphics[scale=0.23]{./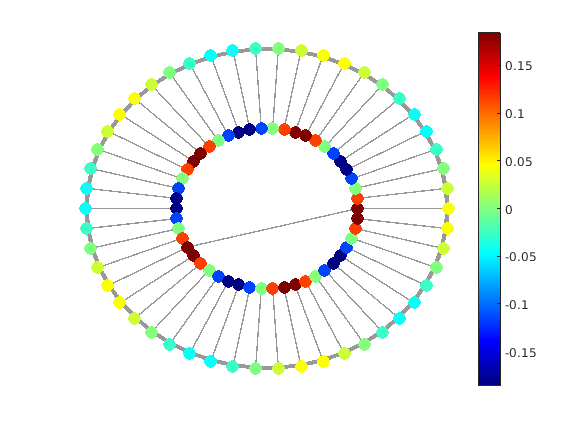}
\caption{}
\end{subfigure}
\hspace{1cm}
\begin{subfigure}[t]{0.45\textwidth}
\centering	
\includegraphics[scale=0.23]{./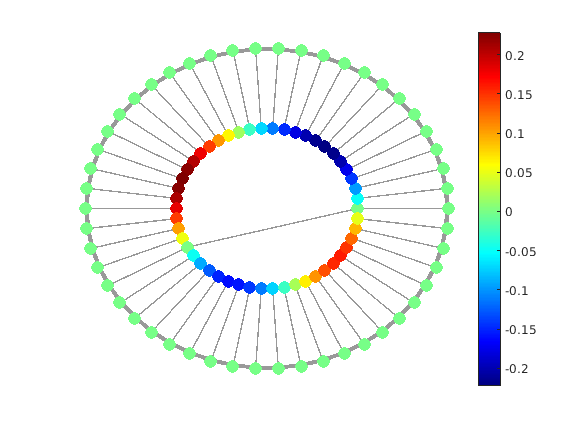}\includegraphics[scale=0.23]{./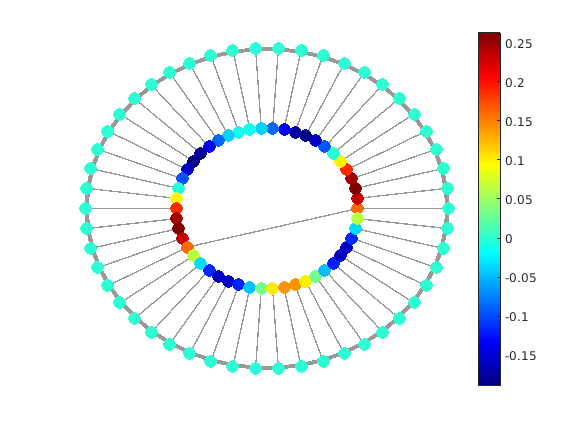}
\caption{}
\end{subfigure}
\caption{(a) Example graph Fourier harmonics (b) Graph Slepian harmonics. Here the subgraph we are interested in is the inner circle: Graph Slepian harmonics are concentrated within the subgraph while Fourier harmonics are global.}
\label{fig:toy1}
\end{figure}



\subsection{Graph Slepians}
\label{subsec:slepians}
Graph Slepians~\cite{VilDemPre17} are the extension of classical Slepians~\cite{SlePol61, Sle78}, which solves the fundamental problem of optimally concentrating a signal jointly in spatial and graph spectral domains. Suppose we are interested in a specified set of nodes $\cS \subset \cV$ that contains $N_{S}$ nodes in which we want the energy concentration to be maximal. Graph Slepians provides a way to design harmonics that are maximally concentrated within a predefined subset of nodes while also ensuring that they are bandlimited to a certain frequency. Specifically, it involves finding harmonics that are bandlimited to the first $K$ graph placian eigenvectors and  having maximum energy concentration over a subset of nodes $\cS \subset \cV$, i.e.,
\begin{align}
\label{eq:slep_problem}
    \underset{\bz}{\mathrm{maximize}} \quad \sum_{i \in \cS}~[z(i)]^2 & \quad \mathrm{subject~to} \quad \hat{z}(\lambda_\ell) = 0~~ \mathrm{for}~~ \lambda_\ell > K.
\end{align}
Let $\bS_{\mathrm{B}}$ be the diagonal matrix with first $K$ entries to be 1, and $\bS_{\mathrm{V}}$ be the diagonal node selection matrix
with its diagonal entries be 1 corresponding to the selected node index and zero otherwise:
\begin{align*}
    \bS_{\mathrm{B}} = \begin{bmatrix}
        1 & 0 & 0 & 0 & 0 \\
        0 & \ddots & 0 & 0 & 0 \\
        0 & 0 & 1 & 0 & 0 \\
        0 & 0 & 0 & 0 & 0 \\
        0 & 0 & 0 & 0 & \ddots \\
    \end{bmatrix}, \quad \bS_{\mathrm{V}} = \begin{bmatrix}
        0 & 0 & 0 & 0 & 0 \\
        0 & 1 & 0 & 0 & 0 \\
        0 & 0 & \ddots & 0 & 0 \\
        0 & 0 & 0 & 0 & 0 \\
        0 & 0 & 0 & 0 & 1 \\
    \end{bmatrix}.
\end{align*}
Then the problem \eqref{eq:slep_problem} is to find the vector that maximizes the \textbf{energy concentration criterion}
\begin{equation}
\label{eq:mu}
    \underset{\hat{\bz}}{\mathrm{argmax}}\quad  \frac{\hat{\bz}^T \bS_{\mathrm{B}}^T \bU^T \bS_{\mathrm{V}} \bU \bS_{\mathrm{B}} \hat{\bz}}{\hat{\bz}^T \hat{\bz}} = \frac{\hat{\bz}^T \bC \hat{\bz}}{\hat{\bz}^T \hat{\bz}},
\end{equation}
where
\begin{equation}
\label{eq:energy_concen}
    \bC = \bS_{\mathrm{B}}^T \bU^T \bS_{\mathrm{V}} \bU \bS_{\mathrm{B}}
\end{equation}
and $\bU$ is the eigenvector matrix of the graph Laplacian. The problem reverts to finding the eigenvectors of the concentration matrix $\bC$. Graph Slepians are subsequently calculated as $\bz_k = \bU \hat{\bz}_k,$ for $k = 1, \ldots, K$, where $\hat{\bz}_k$ are the eigenvectors of $\bC$. Note that the Slepian harmonics are orthonormal over the entire graph as well as orthogonal over the subset $\cS$:  $\bz_i^T \bz_j = \delta_{i-j}$ and  $\bz_i^T \bS_{\mathrm{V}} \bz_j =  \delta_{i-j}$. An example illustration of comparing graph Fourier and Slepians harmonics is shown in Figure~\ref{fig:toy1}. 

In additional to the energy concentration criterion, there exists another graph Slepian design that optimizes the \textbf{modified embedded distance criterion}~\cite{VilDemPre17}
\begin{equation}
\label{eq:xi}
    \underset{\hat{\tilde{\bz}}}{\mathrm{argmin}}\quad  \frac{\hat{\tilde{\bz}}^T \bLambda_K^{1/2} \bS_{\mathrm{B}}^T \bU^T \bS_{\mathrm{V}} \bU \bS_{\mathrm{B}}  \bLambda_K^{1/2} \hat{\tilde{\bz}}}{\hat{\tilde{\bz}}^T \hat{\tilde{\bz}}} = \frac{\hat{\tilde{\bz}}^T \bC_{\mathrm{emb}} \hat{\tilde{\bz}}}{\hat{\tilde{\bz}}^T \hat{\tilde{\bz}}},
\end{equation}
where $\bLambda_K = \bS_{\mathrm{B}}^T \bLambda \bS_{\mathrm{B}}$ and the modified concentration matrix $\bC_{\mathrm{emb}} = \bLambda_K^{1/2} \bC \bLambda_K^{1/2}$. Slepians based on the above criterion are then computed as $\tilde{\bz}_k = \bU \hat{\tilde{\bz}}_k,$ for $k = 1, \ldots, K$, where $\hat{\tilde \bz}_k$ are the eigenvectors of $\bC_{\mathrm{emb}}$.

Both Slepian designs exhibit vertex-domain localization and spectral bandlimit-ness. However, a key distinction arises in their interpretation: the eigenvalues of $\bC_{\mathrm{emb}}$ represent the modified embedded distance, effectively a measure of ``frequency" localized within the subgraph $\cS$, whereas the eigenvalues of $\bC$ represent the energy concentration in the subgraph. This highlights their key difference in smoothness, i.e., Slepian signals based on $\bC_{\mathrm{emb}}$ are constructed to be both smooth and localized, while Slepian signals based on $\bC$ prioritize local energy concentration irrespective of its oscillatory content.
\begin{figure}[t]
\centering
\includegraphics[width=0.9\linewidth]{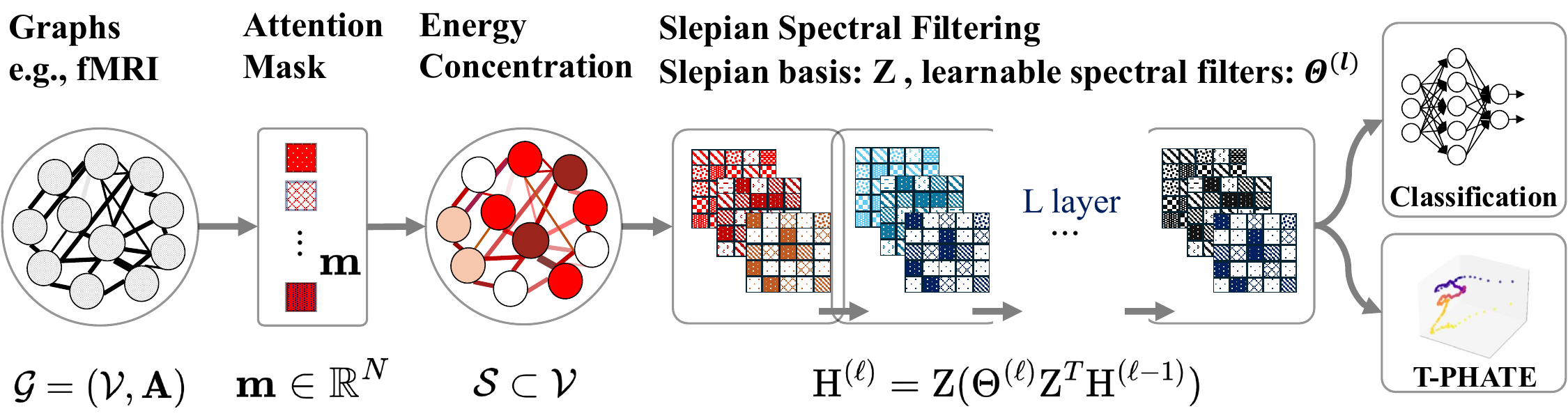}
\caption{Architecture of SlepNet.}
\label{fig:slepnet_arch}
\end{figure}

\section{Theoretical Motivation}
\label{sec:theory}

Here we show that the graph Slepians can also be interpreted as graph signals that are constrained to a subset of nodes $\mathcal{S} \subset \mathcal{V}$, such that it achieves the maximum energy concentration within a specific frequency band $\mathcal{W} \subset \mathcal{B}$. We formalize this in following proposition :
\begin{prop}
\label{prop:1}
The graph Slepian harmonics spacelimited to a set of nodes $\cS$ and maximizing energy in a certain frequency band $\cB$ are given by
    \begin{equation}
        \bz = \bS_{\mathrm{V}} \bU \hat{\bz},
    \end{equation}
    where $\hat{\bz}$ are the eigenvectors of the concentration matrix $\bC = \bS_{\mathrm{B}}^T \bU^T \bS_{\mathrm{V}} \bU \bS_{\mathrm{B}}$.
\end{prop}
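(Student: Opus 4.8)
The plan is to realize the claimed spacelimited harmonics as the optimizers of the \emph{dual} concentration problem and then link them to the eigenvectors of $\bC$ through a single shared factorization. First I would write down the dual of \eqref{eq:slep_problem}: instead of bandlimiting the signal and maximizing its spatial energy on $\cS$, require $\bz$ to be spacelimited to $\cS$ (that is, $\bS_{\mathrm{V}}\bz = \bz$) and maximize the fraction of its energy lying in the band $\cB$. Since the band energy is $\sum_{\lambda_\ell \in \cB}[\hat z(\lambda_\ell)]^2 = \hat{\bz}^T\bS_{\mathrm{B}}\hat{\bz} = \bz^T\bU\bS_{\mathrm{B}}\bU^T\bz$, imposing $\bz = \bS_{\mathrm{V}}\bz$ turns the objective into the Rayleigh quotient $\bz^T{\mathbf M}\bz / \bz^T\bz$ with ${\mathbf M} = \bS_{\mathrm{V}}\bU\bS_{\mathrm{B}}\bU^T\bS_{\mathrm{V}}$, whose maximizers are the top eigenvectors of ${\mathbf M}$.

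The key algebraic step is the observation that $\bC$ and ${\mathbf M}$ are the two Gram matrices of one operator. Setting $\bP := \bS_{\mathrm{V}}\bU\bS_{\mathrm{B}}$ and using the idempotence and symmetry of the diagonal selection matrices ($\bS_{\mathrm{V}}^2 = \bS_{\mathrm{V}}$, $\bS_{\mathrm{B}}^2 = \bS_{\mathrm{B}}$, $\bS_{\mathrm{V}}^T = \bS_{\mathrm{V}}$, $\bS_{\mathrm{B}}^T = \bS_{\mathrm{B}}$), I would verify directly that $\bP^T\bP = \bS_{\mathrm{B}}\bU^T\bS_{\mathrm{V}}\bU\bS_{\mathrm{B}} = \bC$ (matching \eqref{eq:energy_concen}) and $\bP\bP^T = \bS_{\mathrm{V}}\bU\bS_{\mathrm{B}}\bU^T\bS_{\mathrm{V}} = {\mathbf M}$. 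Hence $\bC$ and ${\mathbf M}$ share all their nonzero eigenvalues.

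The third step invokes the standard singular-value correspondence between $\bP^T\bP$ and $\bP\bP^T$: if $\hat{\bz}_k$ is an eigenvector of $\bC = \bP^T\bP$ with eigenvalue $\mu_k > 0$, then $\bP\hat{\bz}_k$ is an eigenvector of ${\mathbf M} = \bP\bP^T$ with the same eigenvalue, so up to the positive scalar $\mu_k^{-1/2}$ the dual maximizers are $\bz_k \propto \bP\hat{\bz}_k = \bS_{\mathrm{V}}\bU\bS_{\mathrm{B}}\hat{\bz}_k$. Finally, because $\bC$ is sandwiched between two copies of $\bS_{\mathrm{B}}$, every eigenvector of $\bC$ with nonzero eigenvalue lies in the range of $\bS_{\mathrm{B}}$, i.e.\ $\bS_{\mathrm{B}}\hat{\bz}_k = \hat{\bz}_k$; substituting this collapses $\bS_{\mathrm{V}}\bU\bS_{\mathrm{B}}\hat{\bz}_k$ to $\bS_{\mathrm{V}}\bU\hat{\bz}_k$, which is exactly the asserted form $\bz = \bS_{\mathrm{V}}\bU\hat{\bz}$.

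I expect the main difficulty to be bookkeeping rather than depth. I must state the dual problem precisely enough that its solutions genuinely are eigenvectors of ${\mathbf M}$, and I must flag that the proposition's identity holds only up to the positive normalization $\mu_k^{-1/2}$ and only for the nonzero-eigenvalue harmonics, since the kernel directions ($\mu_k = 0$) carry no energy into $\cB$ and are irrelevant. The support identity $\bS_{\mathrm{B}}\hat{\bz}_k = \hat{\bz}_k$ is what licenses dropping $\bS_{\mathrm{B}}$ from the final expression, and I would state it explicitly rather than absorb it silently.
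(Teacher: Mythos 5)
Your proposal is correct and follows essentially the same route as the paper's proof: formulate the dual (spacelimited) problem as a Rayleigh quotient in $\bS_{\mathrm{V}}\bU\bS_{\mathrm{B}}\bU^T\bS_{\mathrm{V}}$ and transfer its spectrum to $\bC$. Your explicit factorization $\bP=\bS_{\mathrm{V}}\bU\bS_{\mathrm{B}}$ with $\bC=\bP^T\bP$, the singular-vector correspondence, and the support identity $\bS_{\mathrm{B}}\hat{\bz}=\hat{\bz}$ simply spell out (together with the normalization and zero-eigenvalue caveats) the steps the paper compresses into ``since the eigenvalues are the same \ldots the harmonics are given by $\bz=\bS_{\mathrm{V}}\bU\hat{\bz}$.''
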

\begin{proof}
    See Appendix~\ref{subsec:proof}.
\end{proof}

Note that by definition, graph Slepians are designed to solve the optimization problem \eqref{eq:slep_problem} where energy is maximized within a fixed subset of nodes while enforcing bandlimit-ness. In contrast, Proposition~\ref{prop:1}, provides an alternative interpretation of maximizing energy in a frequency band while enforcing spacelimit-ness. For neural signals, we can maximize energy in the fixed energy band while selecting specific brain areas. For example, visual tasks will likely information in the visual cortex and related regions but in bandwidths limited by noise in measurements. Thus, this motivates our use of Slepians for brain signals. 

\section{Methodology}
\label{sec:main}

We propose SlepNet that incorporates Slepian harmonics into spectral graph convolutions allowing us to capture both local and global information in the graph. Figure~\ref{fig:slepnet_arch} illustrates the proposed SlepNet architecture, which contains two modules: subgraph mask learning followed by Slepian convolutions.

\subsection{Attention-based Mask Learning Module}


In this module, we design an \textbf{attention-based} mechanism to learn a mask $\mathbf{m} \in \mathbb{R}^N$, which identifies a subset of nodes $\mathcal{S} \subset \mathcal{V}$ in the input graph that are the most relevant for the downstream task. The goal of the mask is to select selection matrix $\bS_{\mathrm{V}}$, for computation of the Slepian basis. While the Slepian designs do not impose any requirements for the selected of nodes to be spatially contiguous, promoting such contiguity is often desirable to capture coherent regions of the brain in neural signaling, improve model interpretability, or introduce structural priors. To address this, we first cluster the graph nodes using spectral clustering ~\cite{NgJorWei01,ManChaSin18}. Then, we apply the mask selection process on clusters of graph nodes rather than the individual nodes themselves. 

Let $\mathbf{M} \in \mathbb{R}^{ \kappa \times N}$ be a one-hot cluster assignment matrix, where each row $M_i$ corresponds to a single cluster and contains ones at the positions of the nodes belonging to that cluster. Here, $\kappa$ is the chosen number of clusters for the spectral clustering. 
Then we compute cluster-level attention weights $Att = \{a_1, a_2, \ldots a_{\kappa} \}$ as follows:  
\begin{equation*}
  a_i=\sigma(w_i),
\end{equation*}
where $w_i$ are learnable weights and $\sigma$ is a sigmoidal non-linearity, which forces the value to be $0$ or $1$. In practice, this could also be the \textit{hard Tanh} activation in PyTorch. 

Since this attention changes with input features, is well-suited for time series data on graphs, as it allows the model to dynamically focus on regions that vary across time. The resulting selected regions serve as supports over which localized Slepian spectral filtering is applied, allowing the model to focus its computation and representation learning on meaningful subgraphs. By constraining the subgraph selection process in this way, the learned mask not only retains the flexibility needed to capture task-relevant information but also benefits from improved interpretability and robustness.




\subsection{Slepian-based Spectral Filtering} 
Given the subset of nodes selected by the mask-learning module, we compute the corresponding Slepian harmonics of the graph, which serve as a localized spectral basis for the selected region(s). These harmonics are then used to perform spectral filtering on the input graph signals. Specifically, for an input graph signal $\bx$, the Slepian-filtered output is given by 
\begin{equation}
    \tilde{\bx} = \bZ(\bTheta \bZ^T \bx),
\end{equation}
where $\bZ$ denotes the matrix of Slepian harmonics (restricted to the selected node subset) and  $\bTheta$ is the diagonal filter response matrix.

Combining the mask learning module and Slepian spectral filtering, SlepNet jointly learns structured subgraph masks and corresponding spectral filters. The output of the $\ell^{th}$ layer of SlepNet is: 
\begin{equation}\label{eq:slepian}
    \bH_{[:,j]}^{(\ell)} =  \sigma \left( \bZ \sum_{i=1}^p \bTheta_{i,j}^{(\ell)} \bZ^T \bH_{[:,i]}^{(\ell - 1)} \right), \quad j=1,\ldots,q \quad \mathrm{and} \quad \bH^{(0)} = \bX. 
\end{equation}
Here, $\bTheta_{i,j}^{(\ell)}$ is a diagonal filter matrix learned in the Slepian spectral domain, $\bZ$ is a matrix containing Slepian harmonics as its columns, $\bH_{[:,i]}$ denotes the $i^{th}$ column of $\bH$, and $\sigma$ is an element-wise non-linear activation function such as ReLU. This SlepNet layer transforms the input feature matrix $\bH^{(\ell - 1)} \in \mathbb{R}^{N \times p}$ into an output feature matrix $\bH^{(\ell)} \in \mathbb{R}^{N \times q}$. After $L$ layers, the output features $\bH^{(L)}$ are fed to a task-specific head (e.g., an MLP classifier for graph-level prediction). Depending on the type of Slepian harmonics employed, we refer to our model as SlepNet-I when using Slepians maximizing \eqref{eq:mu}, and SlepNet-II when using Slepians optimizing \eqref{eq:xi}, for spectral filtering. 



Since the parameter update of the model requires eigendecomposition of the energy concentration matrix (or modified concentration matrix) in each backpropagation step, we ensure differentiability by relying on established gradients of eigenvalues and eigenvectors for symmetric matrices~\cite{7410696}. 
The gradients rely on the matrix being symmetric to guarantee real eigenvalues. However, because gradient of the eigenvectors become unstable when eigenvalues are too close or not distinct, we add a slight perturbation to the matrix for numerical stability~\cite{LiShiZha24}.

\subsubsection{Approximate Slepian Spectral Filtering}

\begin{wrapfigure}{r}{0.45\textwidth}
\centering
\includegraphics[width=0.99\linewidth]{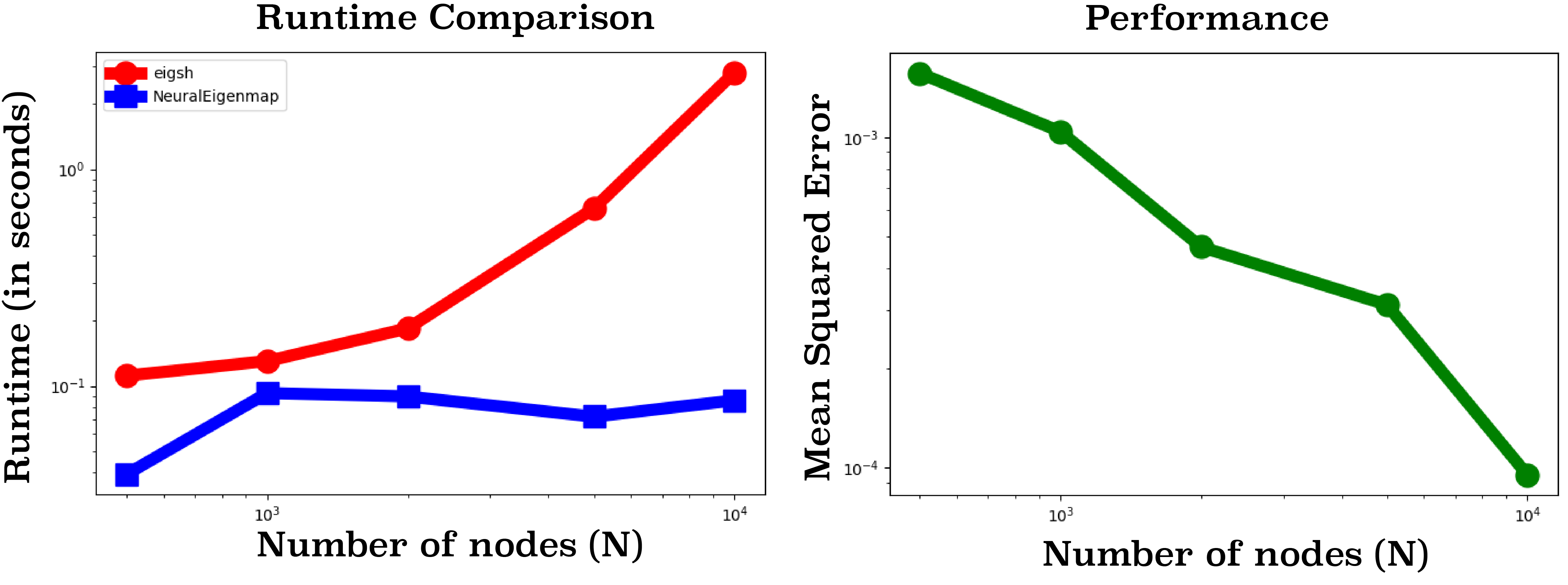}
\caption{(Left) Comparison of runtimes for eigenmapping neural network vs full eigendecomposition. (Right) Performance of Neural eigenmap predictions of Slepians on unseen nodes.}
\label{fig:run_time}
\end{wrapfigure}
Eigendecomposition has complexity $O(N^3)$ where $N$ is the number of vertices in the graph. In order to alleviate this, we invoke the recently developed concept of neural eigenmapping \cite{shaham2018spectralnet, mishne2019diffusion, duque2022geometry}. Neural eigenmapping methods effectively require the computation of eigenvectors on a subset of nodes in the graph for training purposes. In our design, the Slepian eigenmapping neural network $\bN$ takes a $1$-hot node identifier as input and outputs the Slepian eigencoordinates of the vertex. Let $\mathds{1}_j(i)$ be the indicator vector 
\[
\mathds{1}_j(i) = 
\begin{cases}
1, & \text{if } i = j \\
0, & \text{otherwise}
\end{cases}
\quad \text{for } i,j \in \{1, \dots, N\}.
\]
Then the neural network $\bN: \mathbb{R}^N \rightarrow \mathbb{R}^K$, where $K$ is the number of eigenvectors we are recovering for the Slepian basis:
\begin{equation}
   \bN (\bPhi, \mathds{1}(i)) = [z_1(i), \ldots, z_k(i), \ldots, z_K(i)]^T.
\end{equation}
Therefore, when $\mathds{1}(i)$ is the input to the network, it outputs the Slepian eigenvector entries at node $i$: $[z_1(i), \ldots, z_k(i), \ldots, z_K(i)]^T$, where $\bz_k$ is the $k^{th}$ Slepian eigenvector. Once trained, the neural network can be used to efficiently extend the eigenvectors to the remaining vertices.  
Note that similar approaches have been proposed in the realm of non-linear dimensionality reduction. Spectralnet performs general eigenmapping \cite{shaham2018spectralnet}, DiffusionNet maps datapoints to diffusion map coordinates \cite{mishne2019diffusion}, GRAE (geometry regularized autoencoders) maps points to PHATE~\cite{MooVanWan19} coordinates. However, to our knowledge, this is the first application of this technique for graph spectral analysis. As shown in Figure~\ref{fig:run_time}, the runtime of classical eigendecomposition increases as the graph size increases, exhibiting poor scalability beyond 1000 nodes. In contrast, the eigenmapping neural network model has significantly lower and more stable run-times, demonstrating its computational advantage when dealing with large-scale graphs. 
\section{Experiments}
\label{sec:experiments}
We evaluate SlepNet\footnote{The code is available at \url{https://github.com/KrishnaswamyLab/SlepNet}.} in several ways (1) We assess the accuracy of the learned mask,  (2) We assess the overall classification accuracy of SlepNet on neuronal prediction tasks.  This includes mask learning as well as subsequent representation layers, (3) We extract the Slepian-basis representation of signals from SlepNet, visualize these brain transients, and utilize these for a subsequent more fine-grained classification in Section~\ref{subsec:representations}. These evaluations are conducted on three fMRI datasets: two focused on obsessive compulsive disorder (OCD) and one on autism spectrum disorder (ASD). The datasets are thoroughly described in Appendix~\ref{subsec:dataset}. The OCD datasets are represented as anatomically-informed brain graphs with each node corresponding to a brain region with associated temporal signals. The ASD dataset is modeled as functional connectivity graphs where the edges represent Pearson correlation between the time series. Additionally, to showcase applicability beyond neuroscience, we evaluate SlepNet a dataset derived from the Caltrans Performance Measurement System (PeMS), where the nodes correspond to road sensors and the task is to predict the day of the week based on traffic dynamics. Hyperparameters and experimental details are described in Appendix~\ref{subsec:exp_details}.

\subsection{Interpretable Subgraph Selection via Learned Mask}
\label{subsec:mask_results}
\begin{wrapfigure}{r}{0.56\textwidth}
\centering
\vspace{-0.5cm}
\includegraphics[width=1.0\linewidth]{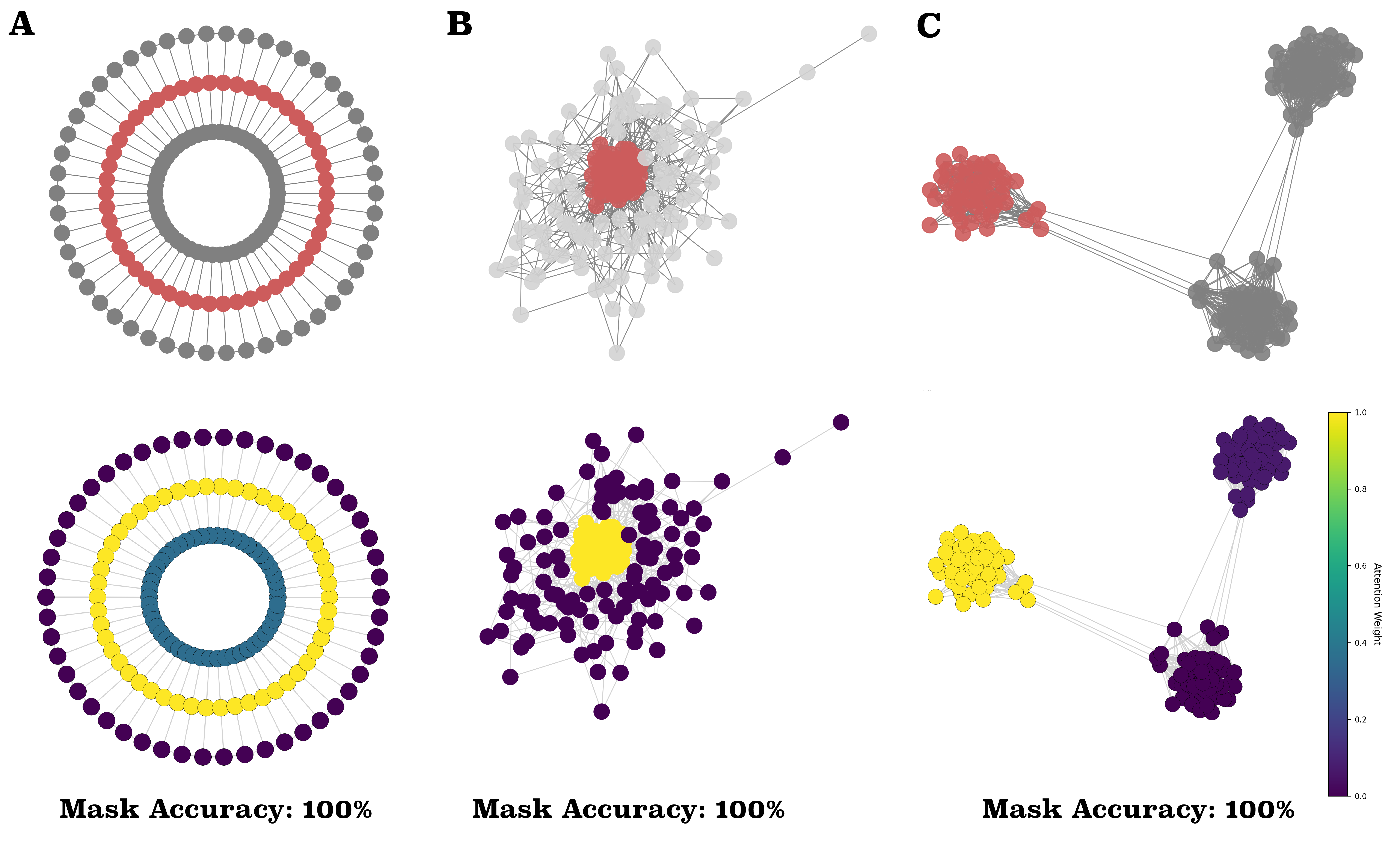}
\caption{Visualization of learned mask for three synthetic datasets. \textbf{Top Row:} Ground truth where the subgraph containing informative signals is highlighted in red. \textbf{Bottom Row:} Learned mask weights from SlepNet, visualized using a continuous colormap (yellow=high weight, purple=low weight). (A) Three-ring graph with middle ring as the informative subgraph, (B)Three-component connected ER graph with one component as the informative subgraph, (C) A single ER graph with connected subgraph embedded near the center.}
\label{fig:toy_mask}
\end{wrapfigure}
In order to assess SlepNet's ability identify relevant subgraphs, we design three synthetic graph datasets with graph-feature variations concentrated in subgraphs. Each graph consists of 150 nodes with associated 10 dimensional temporal signals. We simulated three different types of graphs: (1) A three-ring graph with 50 nodes per ring, (2) A three-component Erd\H{o}s-R\'enyi (ER) graph with intra-component connectivity, and (3) a single ER graph with global connectivity and a embedded dense subgraph. In all the datasets, signal information was concentrated on the subgraphs while the remaining nodes contained Gaussian noise as signals. We evaluate whether the mask learned by SlepNet accurately highlights the ground truth subgraphs as shown in Figure~\ref{fig:toy_mask} and find that it sucessfully recovers the ground truth subgraphs with \textbf{100\%} accuracy. Additionally, we analyze the learned mask that are optimized during training of the PVDM and RA fMRI datasets to identify spatially localized brain regions that contribute most towards the classification. As seen in Figure \ref{fig:mask} (see Appendix), we project the learned mask weights from SlepNet onto the brain for both datasets.

\subsection{SlepNet Outperforms other Methods in Classification}
\label{subsec:class_results}

We first evaluate the performance of our SlepNet model on classification tasks. In the two OCD datasets, the model was trained to classify between OCD and HC subjects, while in the ASD-ABIDE dataset, the task was to classify between ASD and HC subjects. Each brain region (node) is associated with a resting-state temporal fMRI signal. For each subject, we process the time series by taking snapshots per timepoint, hence generating one graph per timepoint along with sinusoidal embeddings to account for temporal position. This allows the model to learn time-aware representations via the sinusoidal embeddings. The model then learns node level representations using Slepians localized to relevant subgraphs and this is then passed into a classifier for binary classification outcome. 

We compare the classification performance to a diverse set of baselines, including Graph Neural Networks (Spectral GCN~\cite{BruZarSzl14}, GCN~\cite{KipWel17}, GAT~\cite{veličković2018graphattentionnetworks}, GIN~\cite{xu2019powerfulgraphneuralnetworks}, and GraphSAGE~\cite{NIPS2017_5dd9db5e}) and the Graph Wavelet Transform. In Table \ref{tab:ocd_five_dataset}, we show that SlepNet achieves the top result on both datasets demonstrating its ability to model brain graphs and extract temporally relevant features for subject-level classification. A bolded score and underline score indicate highest-overall and second-best performance, respectively. Details of an ablation study of the model are provided in Appendix~\ref{sec:ablations}.
\begin{table}[ht]
\centering
\caption{\footnotesize Classification accuracy (\%) across five benchmark datasets: two OCD decision-making tasks (PVDM and RA), an ASD perception task, and two traffic forecasting datasets (PEMS03 and PEMS07). Binary classification tasks have $c=2$ classes; traffic datasets involve day-of-week prediction with $c=7$ classes.}
{\renewcommand{\arraystretch}{1.2}
\resizebox{\textwidth}{!}{\begin{tabular}{lccccc}
\toprule
\textbf{Model} & 
\textbf{OCD-PVDM} (c=2) & 
\textbf{OCD-RA} (c=2) & 
\textbf{ASD-ABIDE} (c=2) & 
\textbf{Traffic-PEMS03} (c=7) & 
\textbf{Traffic-PEMS07} (c=7) \\
\hline
Spectral GCN & 62.62~$\pm$ 0.48\% & 69.76~$\pm$ 0.73\% & 57.68~$\pm$ 0.59\% & \underline{56.31~$\pm$ 0.66\%} & \underline{68.48~$\pm$ 0.65\%} \\
Graph Wavelets & 55.31~$\pm$ 0.77\% & 57.86~$\pm$ 0.79\% & 51.61~$\pm$ 0.56\% & 18.95~$\pm$ 0.69\% & 19.14~$\pm$ 0.62\% \\
GCN & 55.01~$\pm$ 0.80\% & 57.65~$\pm$ 0.85\% & 51.17~$\pm$ 0.70\% & 17.65~$\pm$ 0.71\% & 18.36~$\pm$ 1.01\% \\
GAT & 55.01~$\pm$ 0.80\% & 57.65~$\pm$ 0.85\% & 51.14~$\pm$ 0.70\% & 17.93~$\pm$ 0.51\% & 18.74~$\pm$ 1.13\% \\
GIN & 58.13~$\pm$ 0.47\% & 60.09~$\pm$ 0.70\% & 51.51~$\pm$ 0.61\% & 16.42~$\pm$ 0.76\% & 16.02~$\pm$ 0.90\% \\
GraphSAGE & 55.01~$\pm$ 0.80\% & 57.65~$\pm$ 0.85\% & 51.14~$\pm$ 0.72\% & 24.14~$\pm$ 0.79\% & 30.58~$\pm$ 1.13\% \\
SlepNet - I : Energy Concentration & \textbf{84.70~$\pm$ 0.46\%} & \textbf{90.65~$\pm$ 0.62\%} & \textbf{74.13~$\pm$ 0.81\%} & 52.87~$\pm$ 0.94\% & 64.11~$\pm$ 1.54\% \\
SlepNet - II : Embedded Distance & \underline{74.55~$\pm$ 0.86\%} & \underline{81.10~$\pm$ 1.36\%} & \underline{56.89~$\pm$ 0.97\%} & \textbf{56.56~$\pm$ 0.76\%} & \textbf{70.16~$\pm$ 1.10\%} \\
\bottomrule
\end{tabular}}
\label{tab:ocd_five_dataset}
}
\end{table}
\subsection{SlepNet Learns Rich and Informative Embeddings}
\label{subsec:representations}
SlepNet learns informative and temporally-aware embeddings of brain dynamics. We evaluate these embeddings across three different aspects: (1) Trajectory visualization of the embeddings using T-PHATE~\cite{Busch2023} (2) Downstream classification performance (3) Curvature analysis of the trajectories. Together, the three evaluations provide a strong analysis of the embeddings learned by SlepNet.

\begin{wraptable}{r}{6.0cm}
\centering
\caption{\scriptsize Average curvature on PVDM and RA datasets}
\tiny
{\renewcommand{\arraystretch}{1.2}
\begin{tabular}{lcc}
\toprule
\textbf{Model} & \textbf{OCD-PVDM} & \textbf{OCD-RA} \\
\midrule
\tiny
Spectral GCN & 35.74~$\pm$ 1.54 & \underline{72.06~$\pm$ 3.44} \\
T-PHATE & 33.84~$\pm$ 1.15 & 33.48~$\pm$ 1.08 \\
SlepNet - I  & \textbf{73.79~$\pm$ 4.36} & 64.49~$\pm$ 2.73 \\
SlepNet - II  & \underline{70.17~$\pm$ 2.92} & \textbf{78.08~$\pm$ 4.19} \\
\bottomrule
\end{tabular}
\label{tab:curvature_across_models}
}
\end{wraptable}

\subsubsection{Trajectory Visualization and Curvature Analysis of SlepNet Trajectories}
\label{subsubsec:visualizations}

In order to evaluate the structure and interpretiblity of the SlepNet embeddings, we visualize them using T-PHATE, a nonlinear manifold learning dimensionality reduction method designed to preserve temporal trajectories on high dimensional data. We additionally compare them to the embeddings from Spectral GCN~\cite{BruZarSzl14} and applying T-PHATE to the raw fMRI time series. In Figure~\ref{fig:tphate_pvdm}, each subplot represents the trajectory of a subject colored temporally. SlepNet produces temporally coherent and structured embeddings corresponding to neural-state changes. On the other hand, Spectral GCN embeddings are temporally discontinuous and less structured hence exhibiting no patterns. Furthermore, the direct T-PHATE embeddings on the fMRI time series produce oversmoothed trajectories, whereas SlepNet embeddings are curvier and have more variation.  The oversmooth trajectories can fail distinguish fine grained differences in the neural dynamics, which is reflected in the higher secondary classification accuracy scores described below. Additional visualizations for the OCD-RA and ASD-ABIDE datasets are provided in the Appendix~\ref{subsec:viz_add}. 

\begin{figure}[ht]
\centering
\includegraphics[width=0.95\linewidth]{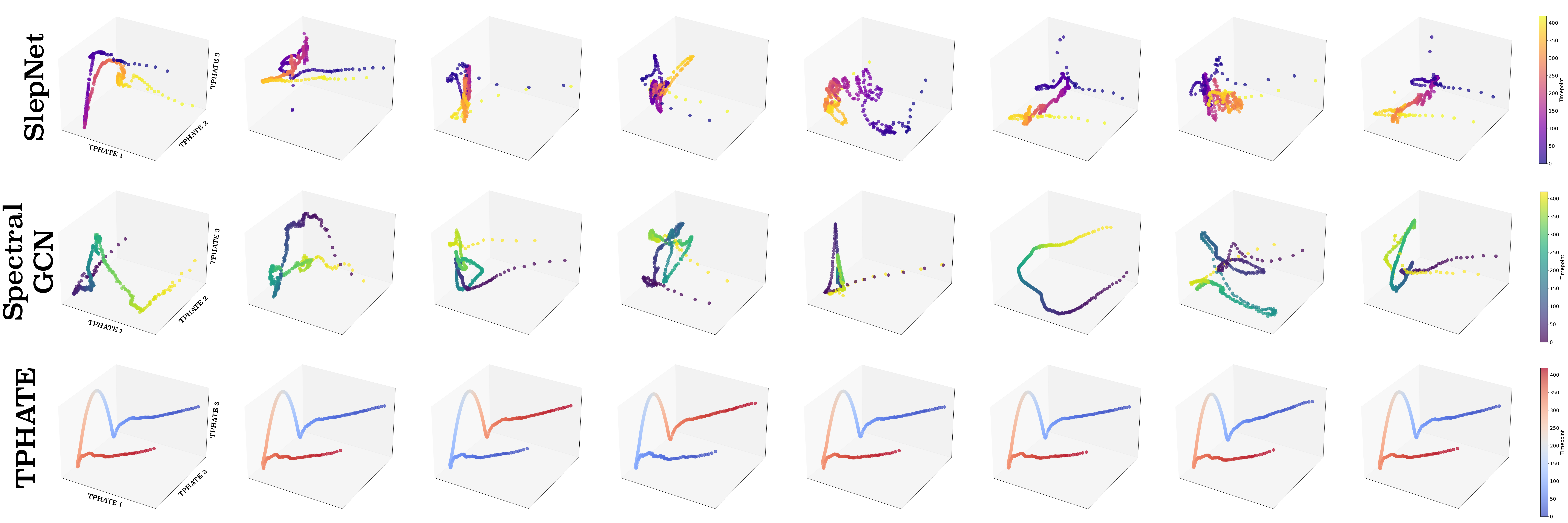}
\caption{T-PHATE visualization (PVDM dataset) of Graph Slepian embeddings using all nodes colored by timepoint, showing dynamic trajectories.}
\label{fig:tphate_pvdm}
\end{figure}

To better characterize the structure of the neural dynamics encoded in the trajectories, we compute their curvature. Curvature serves as a geometric feature that explains how rapidly the trajectory bends in latent space and can reflect transitions and bifurcations in the underlying neural state (see \citep{singh2025}).
For a 1-D trajectory $z(t)$ in 3D-Euclidean space, the curvature of $z(t)$ at point $p$ quantifies the intensity of curve turns from its original direction. Mathematically, this is given as the second order derivative of the trajectory. In the case of the circle defined by $z(t) = (x, y) = (r\cos(t), r\sin(t))$, $\tau = 1/r$ (see \cite{kreyszig1991}). In the setting where the trajectories are embedded from T-PHATE and formed in terms of discrete points, we use the local frame to fit a planar curve at the query point to estimate the curvature. More precisely, at a given point $p$, we select a neighborhood centered around $p$ of size $\approx 5\%$ of the length of the trajectory as our chosen samples for the evaluation of the local frame. Then the Singular Value Decomposition (SVD) is performed to obtain the local frame and local plane. Next the points from the neighborhood are projected to this local plane to fit a circle using the least square method. Subsequently, the radius $r$ of this fitted circle at point $p$ gives us the curvature $\tau = 1/r$. This process can be repeated throughout all the trajectory giving a complete curvature profile of the trajectory. 
To demonstrate complexity of SlepNet's trajectories, we compute the average curvature of each trajectory over all points within.  As shown in Table~\ref{tab:curvature_across_models}, SlepNet has the highest curvature values on average compare to other models, indicating that SlepNet's representation has more detailed and complex dynamics. 

\subsubsection{Downstream Classification from SlepNet Embeddings}

\begin{wraptable}{r}{4.4cm}
\centering
\tiny
\vspace{-1cm}
\caption{\scriptsize Classification of Male vs Female from T-PHATE embeddings}
{\renewcommand{\arraystretch}{1.2}
\begin{tabular}{lc}
\toprule
\textbf{Model} & \textbf{Classification Accuracy} \\
\hline
Spectral GCN & 53.62~$\pm$ 0.44\% \\
GCN & 54.38~$\pm$ 0.67\% \\
GAT & 53.43~$\pm$ 0.19\% \\
GIN & 55.71~$\pm$ 0.30\% \\
GraphSAGE & 53.57~$\pm$ 0.38\% \\
SlepNet - I  & \textbf{58.24~$\pm$ 1.40\%} \\
SlepNet - II  & \underline{55.10~$\pm$ 1.17\%} \\
\bottomrule
\end{tabular}
\label{tab:model_accuracy_3}}
\end{wraptable}
Finally, in order to quantify how informative SlepNet's embeddings are, we define a Tier 2 classification task on only the learned embeddings to further predict the sex of the subject (Male or Female). For this task, we used one of the OCD datasets, which contains metadata of the subject sex. We train a multi-layer perceptron (MLP) classifier using the model's learned embeddings as input and compute binary classification accuracy. As we can see in Table~\ref{tab:model_accuracy_3}, the embeddings from SlepNet yield higher accuracy than those obtained from the baselines. 





\section{Conclusions}
\label{sec:conclusions}

Here we presented SlepNet, a novel graph convolutional network that uses Slepian harmonics in place of graph Fourier harmonics for learning graph filters. Slepians have the advantage that they concentrate energy of the harmonics within selected frequency bands and vertices. SlepNet is a two-part network where the first part actually learns the vertex selectivity of the harmonics on the fly using a learnable mask. The second part computes filtered signals based on Slepian harmonics. We test SlepNet on 2-tiered task, where first SlepNet is trained to classify brain activity using relevant labels (such as disease), second we extract the Slepian-filtered representations and utilize it to visualize brain transients, as well as secondary, more fine-grained classification based on these representations. The visualization shows that trajectories extracted from SlepNet have more fine-grained details and curvature than other representations. Further, we show that utilization of this representation leads to more accurate downstream classification. 
\textbf{Limitations:}
We showcased a method for learning vertex selectivity of Slepian Harmonics, but we fixed the band selectivity. This too could be learnable. Additionally, while we use SlepNet here for characterization and classification of brain dynamics, we do not generate or learn models of the dynamics. This could be a future line of research. 
\textbf{Broader Impacts:}
SlepNet can have a positive societal impact by allowing the classification of brain trajectories to predict psychiatric conditions of patients, and decoding stimulus information. 

\section{Acknowledgments}
S.K. is funded in part by the NIH (NIGMSR01GM135929, R01GM130847), NSF CAREER award IIS-2047856, NSF IIS-2403317, NSF DMS-2327211 and NSF CISE-2403317. S.K is also funded by the Sloan Fellowship FG-2021-15883, the Novo Nordisk grant GR112933. R.S. is funded by the Wu Tsai Postdoctoral Fellowship from Yale University. J.H. is funded by NIH grants (NIMH R01MH111629, NIMH R01MH107573, and NIMH R01 MH119430) and the Gustavus and
Louise Pfeiffer Research Foundation.
\newpage
\appendix

\section{Notations}

\begin{table}[ht]
\centering
\caption{Notations used throughout the paper.}
\renewcommand{\arraystretch}{1.3}
\begin{tabular}{cl}
\toprule
\textbf{Notation} & \textbf{Description} \\
\midrule
$\mathbb{R}^n$ & $n$-dimensional Euclidean space \\
$x, \mathbf{x}, \mathbf{X}$ & Scalar, vector, and matrix \\
$x(i)$ & Value of signal $x$ at node $i$ \\
$\mathbf{X}^T$ & Transpose of matrix $\mathbf{X}$ \\
$N$ & Number of nodes in the graph \\
$\mathcal{N}_i$ & Neighborhood of node $i$ \\
$\mathcal{S}$ & Subset of graph nodes $\mathcal{V}$ \\
$\mathbf{A}$ & Adjacency matrix \\
$\mathbf{D}$ & Degree matrix (diagonal) \\
$\mathbf{L}$ & Graph Laplacian matrix \\
$\mathbf{L}_{\mathrm{n}}$ & Normalized Laplacian matrix \\
$\mathbf{U}$ & Matrix of Laplacian eigenvectors \\
$\mathbf{Z}$ & Slepian harmonics matrix \\
$\mathbf{I}$ & Identity matrix \\
$\mathbf{x} * \mathbf{g}$ & Graph convolution of signals $\mathbf{x}$ and $\mathbf{g}$ \\
$\lambda$ & Graph frequency \\
$\mathcal{B}$ & Set of all graph frequencies \\
$\mathcal{W}$ & Selected subset of graph frequencies \\
$\mathbf{S}_{\mathrm{V}}$ & Node selection matrix (diagonal) \\
$\mathbf{S}_{\mathrm{B}}$ & Frequency band selection matrix (diagonal) \\
$\mathbf{\Theta}$ & Diagonal matrix of graph filter responses \\
$\mathbf{h}_i^{(\ell)}$ & Latent feature vector of node $i$ at layer $\ell$ \\
$\mathbf{H}^{(\ell)}$ & Matrix of all latent features at layer $\ell$ \\
$\sigma$ & Non-linear activation function \\
\bottomrule
\end{tabular}
\label{table:notation_gsp}
\end{table}
\section{Related Works}
\label{subsec:related_work}

Traditional Slepians were extended to graph domain by \citet{VilDemPre17}. The graph Slepians were further studied for spectral guided filtering and denoising applications~\cite{PetVan19}, time-resolved analysis of dynamic graphs~\cite{LiMerVan19}, exploratory analysis on fMRI data~\cite{BolFarObe18}, and applications to \textit{C. Elegans} connectomes~\cite{VanDemPre17_Connectome}. 

Existing graph convolutional networks are generally classified into two categories: spectral methods and spatial methods. Most of the spectral methods define graph convolutions via frequency analysis on graphs. Spectral GCN~\cite{BruZarSzl14} is the first work on spectral convolutional networks on graphs, which leveraged graph Fourier transform to learn filters in spectral domain. Later polynomial approximations of spectral filtering were used to define graph convolutions including ChebNet~\cite{DefBreXav16} and first order GCN~\cite{KipWel17}, which allowed to perform convolutions directly in spatial domain. Moreover, graph wavelets have also been used to define graph convolutions~\cite{XuHuaQi18,XuDaiLi22, Tong_LEGS, CHEW2024101635}. Spatial graph convolutional networks define convolutions in spatial domain and have also been studied as message-passing neural networks~\cite{GilSamSch17}. A few works in this direction include graph attention network (GAT)~\cite{VelGuiAra18}, GraphSAGE~\cite{HamYinLes17}, and graph isomorphism network (GIN)~\cite{XuHuLes18}.

There have been many works under the umbrella of subgraph representation learning for subgraph-level tasks. Subgraph neural networks~\cite{AlsFinLi20} learn subgraph embeddings via a routing mechanism that propagates neural messages between its components and randomly sampled anchor patches from the underlying graph. Some other works in this direction include subhypergraph inductive neural network~\cite{Luo22}, generalizing Weisfeiler-Lehman kernels to subgraphs~\cite{KimOh24}, SHINE~\cite{SheYanJu22}, and subgraph representation learning with self-attention and free adversarial training~\cite{QinTanLu24}. \textbf{SlepNet} is the first ever graph convolution network based on graph Slepians, enabling optimal signal energy concentration within relevant laerned subgraphs.
\newtheorem*{prop1}{Proposition}
\section{Proof of Proposition 1}
\label{subsec:proof}
As discussed in Section~\ref{sec:background}, graph Slepians are computed by optimizing energy concentration criterion~\eqref{eq:mu} such that they are bandlimited and have maximum energy over a subset of nodes $\cS \subset \cV$. In contrast, graph Slepians can also be interpreted as signals that are spacelimited to a subset of nodes, while at the same time maximize signal energy in a specific frequency band $\cW \subset \cB$. In this case, the problem is formulated as 
\begin{align}
\label{eq:slep2}
    \underset{\bz}{\mathrm{maximize}} \quad \sum_{\lambda_\ell \in \cW}~[\hat{z}(\lambda_\ell)]^2 & \quad \mathrm{subject~to} \quad z(i) = 0~~ \mathrm{for}~~ i \in \cS.
\end{align}
\begin{prop1}
The graph Slepian harmonics spacelimited to a set of nodes $\cS$ and maximizing energy in a certain frequency band $\cB$ are given by
    \begin{equation}
        \bz = \bS_{\mathrm{V}} \bU \hat{\bz},
    \end{equation}
    where $\hat{\bz}$ are the eigenvectors of the concentration matrix $\bC = \bS_{\mathrm{B}}^T \bU^T \bS_{\mathrm{V}} \bU \bS_{\mathrm{B}}$. Here, $\bS_{\mathrm{B}}$ is the diagonal band selection matrix, $\bS_{\mathrm{V}}$ is the diagonal node selection matrix, and $\bU$ is the Laplacian eigenvector matrix.
\end{prop1}
\begin{proof}
    For spacelimited signal, we can write 
    \begin{align*}
        \sum_{\lambda_\ell \in \cW}~[\hat{z}(\lambda_\ell)]^2 &= \hat{\bz}^T \bS_{\mathrm{B}} \hat{\bz} \\
        &= (\bU ^T \bz)^T \bS_{\mathrm{B}} (\bU ^T \bz) \\
        &= \bz^T \bU \bS_{\mathrm{B}} \bU^T \bz \\
        &= (\bS_{\mathrm{V}} \bz)^T \bU \bS_{\mathrm{B}}  \bU^T \bS_{\mathrm{V}} \bz \\
        &= \bz^T \bS_{\mathrm{V}} \bU \bS_{\mathrm{B}} \bU^T \bS_{\mathrm{V}} \bz.
    \end{align*}
Since the eigenvalues of $ \bS_{\mathrm{V}} \bU \bS_{\mathrm{B}} \bU^T \bS_{\mathrm{V}}$ are same as the eigenvalues of the energy concentration matrix $\bC = \bS_{\mathrm{B}}^T \bU^T \bS_{\mathrm{V}} \bU \bS_{\mathrm{B}}$, therefore, due to spacelimit-ness constraint, the Slepian harmonics solving \eqref{eq:slep2} are given by $\bz = \bS_{\mathrm{V}} \bU \hat{\bz}$.  
\end{proof}
\section{Datasets}
\label{subsec:dataset}

\subsection{Obsessive Compulsive Disorder - Perceptual and Visual Decision Making}
The first dataset used to evaluate our model consists of resting-state fMRI data collected from participants before and after they completed a novel decision-making task known as the Perceptual and Value-based Decision-Making (PVDM) task \cite{ma_ocd_2021}. This task required participants to make judgments using customized visual stimuli in two distinct settings: one focused on perceptual discrimination, and the other on value-based choices. The study sample included both healthy controls (HC) and unmedicated individuals diagnosed with obsessive-compulsive disorder (OCD). Prior analyses of this dataset found that males with OCD, in particular, showed more cautious decision-making and less efficient evidence accumulation compared to matched control participants. To analyze the brain network structure, we used the DiFuMo atlas \cite{Dadi2020} to define a graph where each node corresponds to a cortical or subcortical region, and edges link regions that are spatially adjacent.

\subsection{Obsessive Compulsive Disorder - Risk and Ambiguity}
The second dataset we used is the fMRI data from 51 participants, including 23 unmedicated individuals with obsessive-compulsive disorder (OCD) and 28 healthy controls matched for age, gender, and IQ. Participants completed a decision-making task designed to probe responses to risk and ambiguity (R\&A), based on the paradigm developed by \cite{pushkarskaya2015} and adapted from \cite{levy_neural_2010}. The task consisted of seven blocks, alternating between resting periods, passive viewing of lottery images, and active decision-making. In each trial, participants chose between a guaranteed amount of money and a lottery option with varying levels of uncertainty. This setup allowed us to examine how people make decisions when faced with different kinds of uncertainty.

\subsection{Autism Spectrum Disorder}
Third, we consider the ABIDE (Autism Brain Imaging Data Exchange) dataset \cite{DiMartino2014} that consists of resting-state fMRI data collected from 1,100 participants including both individuals with Autism Spectrum Disorder (ASD) and Healthy Control across 17 different international sites. We consider 40 participants with ASD and 40 healthy controls. In order to construct the brain network, we used the BASC multiscale atlas \cite{BELLEC20101126} to define the graph where each node corresponds to a brain region, and the edges are defined based on functional connectivity.

\subsection{Traffic Dynamics}
Finally, we consider data from the Caltrans Performance Measurement System (PeMS)~\cite{doi:10.3141/1748-12} consisting of real-time traffic measurements across major highways. These datasets capture traffic speed measurements collected every 5 minutes from over 39,000 sensors deployed in the 3rd (PEMS03) and 7th (PEMS07) congressional districts. Each sensor is represented as a node in the graph and the edges are the spatial relationship between the sensors (physical road connections and sensor proximity). The recorded measurements form the temporal graph signals, yielding a time series over the graph. We aim to perform multi-class classification, where each graph is labeled with a day of the week, hence enabling our model to learn patterns in traffic dynamics. 

{\tiny
\begin{table}[ht]
\centering
\scalebox{0.75}{
\begin{tabular}{lcccc}
\toprule
\textbf{Dataset} & \textbf{\#Timepoints} & \textbf{\#Subjects} & \textbf{Task} & \textbf{Data Modality} \\
\midrule
OCD-PVDM~\cite{ma_ocd_2021} 
& 420 
& 51 (28 HC/23 OCD) 
& Perceptual and value-based decision-making 
& Resting-state fMRI \\
OCD-RA~\cite{pushkarskaya2015, levy_neural_2010} 
& 420 
& 51 (28 HC/23 OCD) 
& Risk and ambiguity-based decision-making 
& Resting-state fMRI \\
ASD~\cite{DiMartino2014} 
& Variable 
& 80 (40 HC/40 ASD) 
& Social perception (point-light motion) 
& Resting-based fMRI \\
Traffic-PEMS03~\cite{doi:10.3141/1748-12} 
& 26208 
& N/A 
& Day-of-week traffic classification 
& Traffic sensor network \\
Traffic-PEMS07~\cite{doi:10.3141/1748-12} 
& 28224 
& N/A 
& Day-of-week traffic classification 
& Traffic sensor network \\
\bottomrule
\end{tabular}
}
\caption{Summary of datasets used for model evaluation. Each fMRI subject contributes a graph with node-wise time series; traffic datasets consist of recordings from spatiotemporal sensor networks.}
\label{tab:dataset_summary}
\end{table}
}

\section{Additional Results}
\label{sec:appendix_results}

As described in Section~\ref{sec:main}, \ref{sec:experiments} and seen in Figure~\ref{fig:summary_results}, SlepNet processes graph structured input such as fMRI data by learning relevant subgraphs through an attention-based mask learning module and is followed by Slepian spectral filtering. SlepNet yields three important outputs: (1) a learned mask, which highlights important regions of the brain and provides neuroscientific interpretability. (2) the Tier 1 classification output for neuronal prediction tasks such as distinguishing between OCD patients and healthy controls; and (3) the Slepian-filtered learned representations, which capture important spatiotemporal features and is useful for downstream analysis. We evaluate the quality of each of these outputs. The learned masks are projected onto the brain map to interpret which regions are emphasized by our model. The Tier 1 classification accuracy reflects the model's power in neuronal prediction tasks across all datasets. In order to assess the representational quality of the learned embeddings, we perform further downstream analysis on them. The learned embeddings are visualized using T-PHATE to reveal temporal structure and dynamics and are utilized for more fine-grained Tier 2 classification. Additionally, we perform curvature analysis on the T-PHATE trajectories to quantify changes in neural states over time. Our analysis extends beyond simple classification to offering deeper characterization of Slepian embeddings, hence demonstrating its robustness and expressivity. 

\begin{figure}[ht]
\centering
\includegraphics[width=0.99\linewidth]{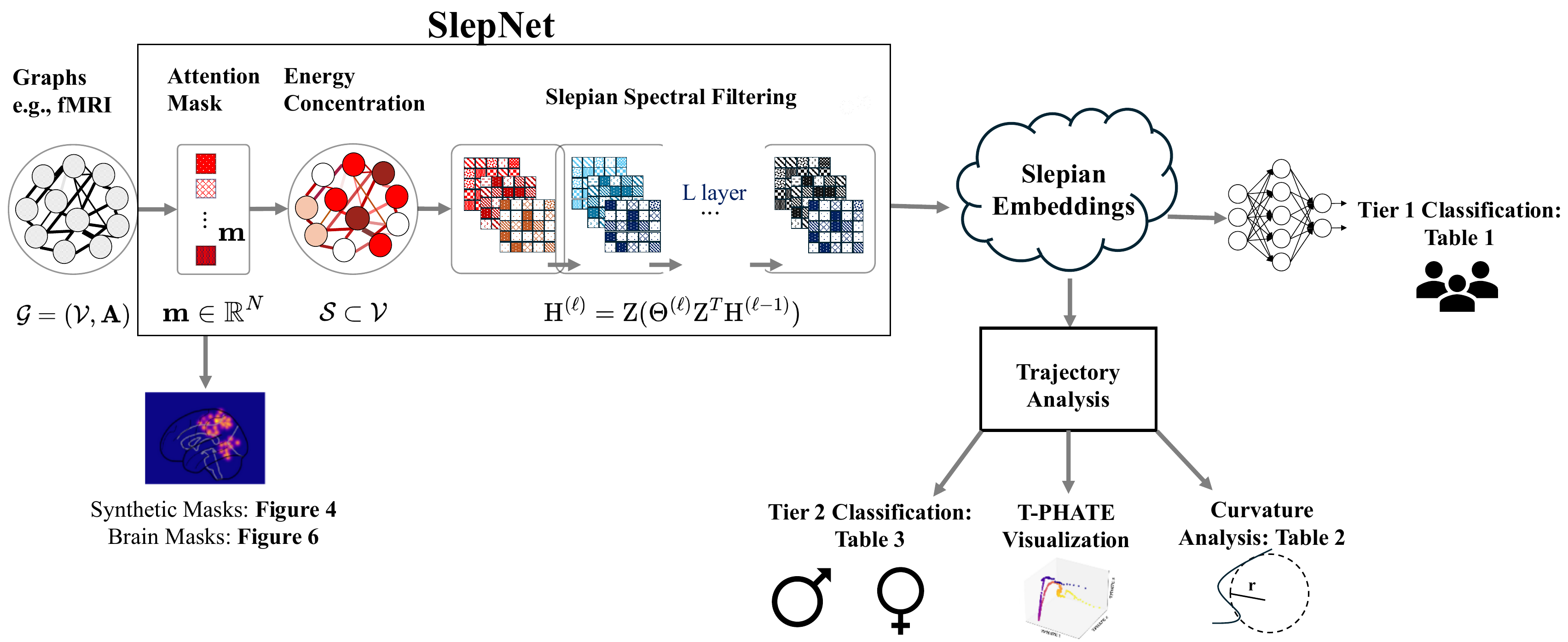}
\caption{Summary of SlepNet Results.}
\label{fig:summary_results}
\end{figure}

\subsection{SlepNet Identifies Relevant Brain Regions through Mask Learning}
\label{subsec:brain_mask_learning}

As described in the Section~\ref{sec:main}, SlepNet architecture features an attention-based mask learning module based on which Slepian spectral filtering is performed. This module learns to assign importance weights to different regions of the brain graph. The learned mask makes our model interpretable by identifying regions of the brain that contribute most to the classification decision. Since the model is trained to classify OCD patients and healthy controls,the learned mask highlights brain regions that are most predictive of the disorder. Figure~\ref{fig:mask} shows the projection of the learned mask onto four different anatomical views of the brain: sagittal left hemisphere, coronal, sagittal right hemisphere, and axial. The learned mask consistently highlights with high importance the parietal lobe, dorsolateral prefrontal cortex, and left temporal across both SlepNet variants and datasets. The regions highlighted align with their roles in differentiating between OCD patients and Healthy Controls as shown in~\cite{MENZIES2008525, 10.3389/fpsyt.2019.00452}. The interpretability of the learned mask is especially important in neuroscience applications, where the ability to highlight important brain regions involved in making accurate predictions. By incorporating the attention-based mask learning module, SlepNet is able to focus on relevant subgraphs of the brain for slepian-based spectral filtering and enabling interpretable insights into neural substrates of the psychiatric condition. 

\begin{figure}[ht]
\centering
\includegraphics[width=0.99\linewidth]{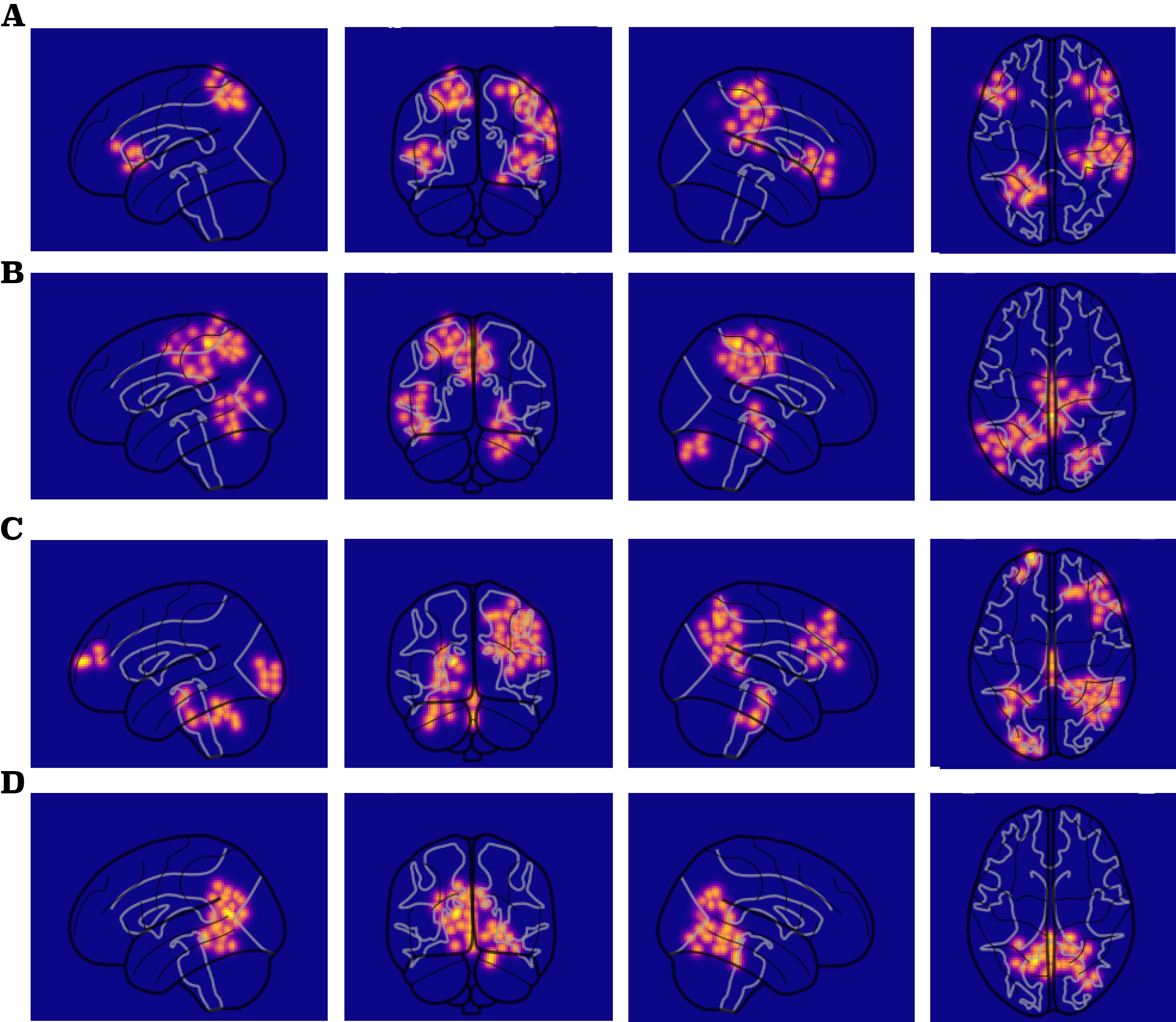}
\caption{Visualization of learned mask projected onto the brain surfaces. (A) SlepNet - I : Energy Concentration on the PVDM dataset, 
(B) SlepNet - I : Energy Concentration on the RA dataset,  (C) SlepNet - II : Embedded Distance on the PVDM dataset, (D) SlepNet - II : Embedded Distance on the RA dataset.
Brighter regions indicate higher learned importance in classification.}
\label{fig:mask}
\end{figure}

\subsection{SlepNet Captures Temporally Coherent Trajectories Across fMRI datasets}
\label{subsec:viz_add}

To further analyze the quality of the representations learned by SlepNet, similar to our visualization analysis in Section~\ref{subsubsec:visualizations}, we present the T-PHATE visualizations of SlepNet, Spectral GCN, and raw fMRI time series embeddings for the OCD-RA and ASD datasets. T-PHATE is a nonlinear dimensionality reduction serves to preserve temporal trajectories on high dimensional data. 
In Figures~\ref{fig:tphate_ra} and \ref{fig:tphate_asd}, each plot represents the trajectory of a subject colored by time. SlepNet embeddings in both datasets exhibit temporally coherent trajectories and are curvier with more variation that reflect the neural dynamics, while Spectral GCN embeddings remain fragmented and disorganized and raw T-PHATE trajectories show limited temporal distinction. Hence the results further show SlepNet's ability to capture structured temporal embeddings across diverse fMRI datasets.

\begin{figure}[ht]
\centering
\includegraphics[width=0.99\linewidth]{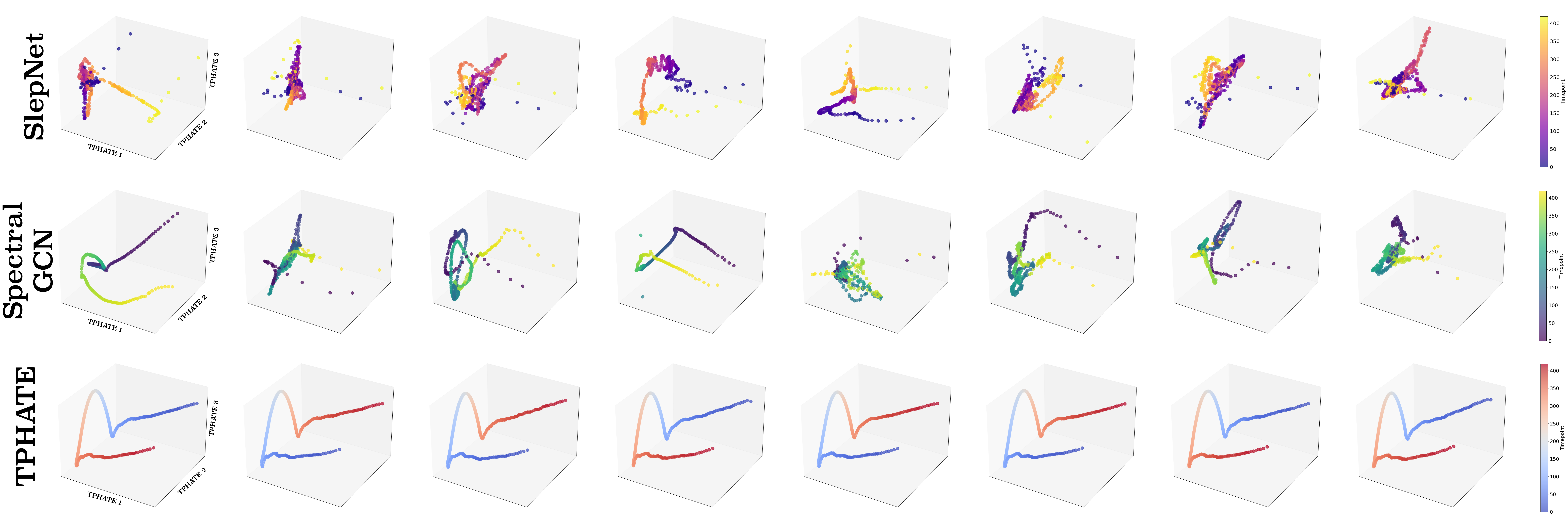}
\caption{T-PHATE visualization (RA dataset) of Graph Slepian embeddings using all nodes colored by timepoint, showing dynamic trajectories.}
\label{fig:tphate_ra}
\end{figure}

\begin{figure}[ht]
\centering
\includegraphics[width=0.99\linewidth]{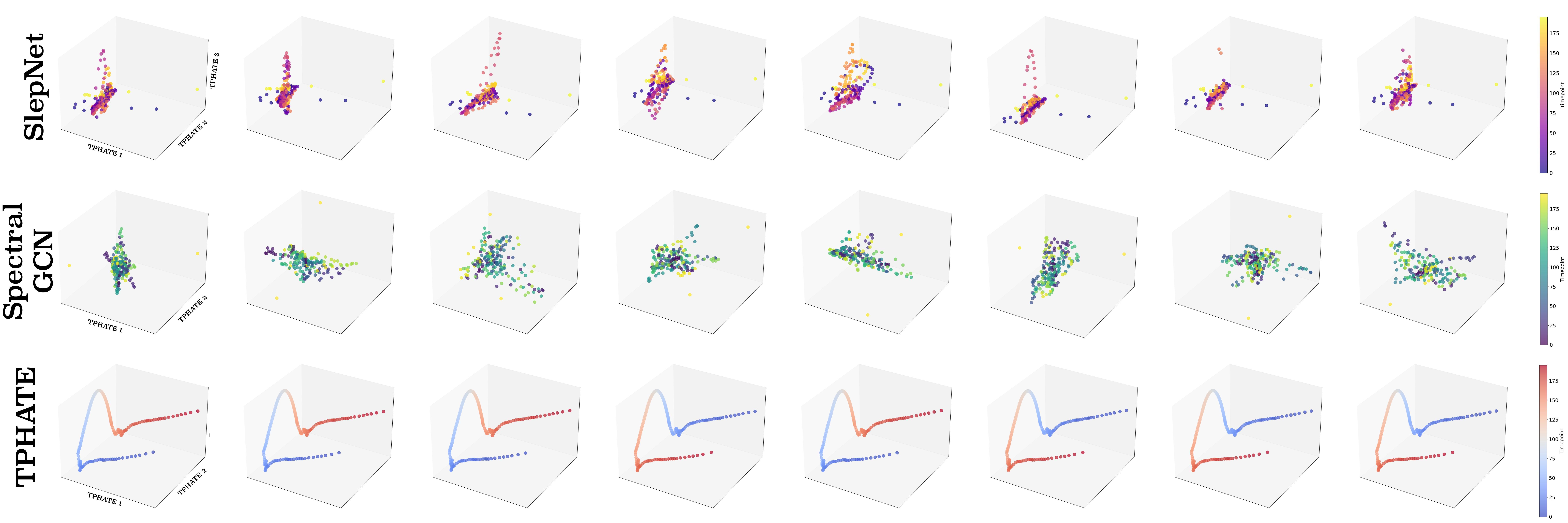}
\caption{T-PHATE visualization (ASD dataset) of Graph Slepian embeddings using all nodes colored by timepoint, showing dynamic trajectories.}
\label{fig:tphate_asd}
\end{figure}

\section{Ablation Studies}
\label{sec:ablations}
In order to investigate the impact of the number of Slepian basis vectors used, we conduct an ablation study across the three fMRI datasets. For each setting, we vary the number of slepian vectors used for spectral filtering ranging from 5 to 500 and report the mean $\pm$ standard deviation of classification accuracy over 10 runs. As seen in Table~\ref{tab:slepnet_ablation}, increasing the number of Slepian vectors consistently improves the performance across all datasets. Notably, SlepNet-I: Energy Concentration using 500 Slepian vectors achieves the best result, outperforming the lower dimensional setting across all three datasets. The performance of the model drastically improves as the number of Slepian vector increases, indicating that expanding the spectral bandwidth allows the model to capture more discriminative frequency components of the signal.  

\begin{table}[ht]
\centering
\caption{Ablation study showing mean $\pm$ standard deviation of classification accuracy (\%) over 10 runs for different numbers of Slepian basis vectors, across three datasets and two SlepNet variants.}
{\renewcommand{\arraystretch}{1.2}
\resizebox{\textwidth}{!}{\begin{tabular}{clccc}
\toprule
\textbf{\# Slepian Vectors} & \textbf{Model} & \textbf{PVDM} & \textbf{RA} & \textbf{ASD} \\
\midrule
\multirow{2}{*}{5}  & SlepNet - I : Energy Concentration & 64.02~$\pm$ 0.85\% & 65.73~$\pm$ 0.85\% & 54.88~$\pm$ 0.65\% \\
                    & SlepNet - II : Embedded Distance   & 61.17~$\pm$ 0.67\% & 62.86~$\pm$ 1.36\% & 52.93~$\pm$ 0.54\% \\
\midrule
\multirow{2}{*}{10} & SlepNet - I : Energy Concentration & 67.96~$\pm$ 0.62\% & 71.06~$\pm$ 0.85\% & 58.58~$\pm$ 0.69\% \\
                    & SlepNet - II : Embedded Distance   & 63.72~$\pm$ 0.86\% & 66.97~$\pm$ 1.73\% & 53.39~$\pm$ 0.49\% \\
\midrule
\multirow{2}{*}{15} & SlepNet - I : Energy Concentration & 70.00~$\pm$ 0.68\% & 74.91~$\pm$ 0.64\% & 60.11~$\pm$ 0.69\% \\
                    & SlepNet - II : Embedded Distance   & 64.34~$\pm$ 1.24\% & 69.32~$\pm$ 1.37\% & 53.52~$\pm$ 0.76\% \\
\midrule
\multirow{2}{*}{20} & SlepNet - I : Energy Concentration & 72.07~$\pm$ 0.31\% & 76.91~$\pm$ 0.56\% & 62.10~$\pm$ 0.58\% \\
                    & SlepNet - II : Embedded Distance   & 65.71~$\pm$ 1.25\% & 71.51~$\pm$ 1.16\% & 53.63~$\pm$ 0.75\% \\
\midrule
\multirow{2}{*}{50} & SlepNet - I : Energy Concentration & 79.73~$\pm$ 0.38\% & 84.73~$\pm$ 0.64\% & 67.31~$\pm$ 1.31\% \\
                    & SlepNet - II : Embedded Distance   & 71.01~$\pm$ 1.35\% & 77.59~$\pm$ 1.15\% & 55.48~$\pm$ 0.88\% \\
\midrule
\multirow{2}{*}{100} & SlepNet - I : Energy Concentration & 84.70~$\pm$ 0.46\% & 90.65~$\pm$ 0.62\% & \textbf{74.13~$\pm$ 0.81\%} \\
                     & SlepNet - II : Embedded Distance   & 74.55~$\pm$ 0.86\% & 81.10~$\pm$ 1.36\% & \textbf{56.89~$\pm$ 0.97\%} \\
\midrule
\multirow{2}{*}{300} & SlepNet - I : Energy Concentration & 91.52~$\pm$ 0.37\% & 95.49~$\pm$ 0.37\% & -~$\pm$ -\% \\
                     & SlepNet - II : Embedded Distance   & 79.82~$\pm$ 0.77\% & 85.76~$\pm$ 0.84\% & -~$\pm$ -\% \\
\midrule
\multirow{2}{*}{400} & SlepNet - I : Energy Concentration & 92.53~$\pm$ 0.40\% & 95.85~$\pm$ 0.33\% & -~$\pm$ -\% \\
                     & SlepNet - II : Embedded Distance   & 80.51~$\pm$ 0.56\% & 86.21~$\pm$ 0.66\% & -~$\pm$ -\% \\
\midrule
\multirow{2}{*}{500} & SlepNet - I : Energy Concentration & \textbf{92.94~$\pm$ 0.46\%} & \textbf{95.97~$\pm$ 0.30\%} & -~$\pm$ -\% \\
                     & SlepNet - II : Embedded Distance   & \textbf{81.62~$\pm$ 0.62\%} & \textbf{86.33~$\pm$ 1.01\%} & -~$\pm$ -\% \\
\bottomrule
\end{tabular}}

\label{tab:slepnet_ablation}}
\end{table}

\section{Experimental Details}
\label{subsec:exp_details}

In the SlepNet architecture, we use $L = 3$ layers ($64$ dimension latent embeddings) as shown in Equation~\ref{eq:slepian}. The nonlinearity $\sigma$ in each layer is set to ReLU. For all datasets, we use $\kappa=60$ as the number of clusters. We performed a grid search to optimize for learning rate and weight decay parameters used in the AdamW optimizer~\cite{loshchilov2019decoupledweightdecayregularization}. We use cross-entropy loss with label smoothing for classification. We trained the model for 300 epochs on every dataset, saved and recorded the results of the model with highest test accuracy during training. All experiments were repeated for 10 iterations to compute mean and standard deviation of the accuracy. The datasets used are split into $80\%$ training and $20\%$ testing sets for each run. The experiments are GPU-accelerated and typically take approximately 1 hour of training on a single NVIDIA A100 GPU using 40GB of memory.




\clearpage
\bibliography{references.bib}

\begin{thebibliography}{56}
\providecommand{\natexlab}[1]{#1}
\providecommand{\url}[1]{\texttt{#1}}
\expandafter\ifx\csname urlstyle\endcsname\relax
  \providecommand{\doi}[1]{doi: #1}\else
  \providecommand{\doi}{doi: \begingroup \urlstyle{rm}\Url}\fi

\bibitem[Gadgil et~al.(2020)Gadgil, Zhao, Pfefferbaum, Sullivan, Adeli, and Pohl]{GadZhaPfe20}
Soham Gadgil, Qingyu Zhao, Adolf Pfefferbaum, Edith~V Sullivan, Ehsan Adeli, and Kilian~M Pohl.
\newblock Spatio-temporal graph convolution for resting-state f{MRI} analysis.
\newblock In \emph{Medical Image Computing and Computer Assisted Intervention--MICCAI 2020: 23rd International Conference, Lima, Peru, October 4--8, 2020, Proceedings, Part VII 23}, pages 528--538. Springer, 2020.

\bibitem[Luo et~al.(2024)Luo, Wu, Yang, Xue, Beheshti, Sheng, McAlpine, Sowman, Giral, and Yu]{LuoWuYan24}
Xuexiong Luo, Jia Wu, Jian Yang, Shan Xue, Amin Beheshti, Quan~Z Sheng, David McAlpine, Paul Sowman, Alexis Giral, and Philip~S Yu.
\newblock Graph neural networks for brain graph learning: a survey.
\newblock In \emph{Proceedings of the Thirty-Third International Joint Conference on Artificial Intelligence}, pages 8170--8178, 2024.

\bibitem[Mohammadi and Karwowski(2024)]{MohKar24}
Hamed Mohammadi and Waldemar Karwowski.
\newblock Graph neural networks in brain connectivity studies: Methods, challenges, and future directions.
\newblock \emph{Brain Sciences}, 15\penalty0 (1):\penalty0 17, 2024.

\bibitem[Shuman et~al.(2013)Shuman, Narang, Frossard, Ortega, and Vandergheynst]{ShuNarFro13}
David~I Shuman, Sunil~K Narang, Pascal Frossard, Antonio Ortega, and Pierre Vandergheynst.
\newblock The emerging field of signal processing on graphs: Extending high-dimensional data analysis to networks and other irregular domains.
\newblock \emph{IEEE signal processing magazine}, 30\penalty0 (3):\penalty0 83--98, 2013.

\bibitem[Bruna et~al.(2014)Bruna, Zaremba, Szlam, and LeCun]{BruZarSzl14}
Joan Bruna, Wojciech Zaremba, Arthur Szlam, and Yann LeCun.
\newblock Spectral networks and deep locally connected networks on graphs.
\newblock In \emph{2nd International Conference on Learning Representations, ICLR}, 2014.

\bibitem[Kipf and Welling(2017)]{KipWel17}
Thomas~N Kipf and Max Welling.
\newblock Semi-supervised classification with graph convolutional networks.
\newblock In \emph{International Conference on Learning Representations}, 2017.

\bibitem[Xu et~al.(2023)Xu, Goldman, Guo, Hollander-Bodie, Trank-Greene, Adelstein, Brouwer, Ying, Krishnaswamy, and Perlmutter]{xu2023blisnetclassifyinganalyzingsignals}
Charles Xu, Laney Goldman, Valentina Guo, Benjamin Hollander-Bodie, Maedee Trank-Greene, Ian Adelstein, Edward~De Brouwer, Rex Ying, Smita Krishnaswamy, and Michael Perlmutter.
\newblock Blis-net: Classifying and analyzing signals on graphs, 2023.
\newblock URL \url{https://arxiv.org/abs/2310.17579}.

\bibitem[Tong et~al.(2024)Tong, Wenkel, Bhaskar, Macdonald, Grady, Perlmutter, Krishnaswamy, and Wolf]{Tong_LEGS}
Alexander Tong, Frederik Wenkel, Dhananjay Bhaskar, Kincaid Macdonald, Jackson Grady, Michael Perlmutter, Smita Krishnaswamy, and Guy Wolf.
\newblock Learnable filters for geometric scattering modules.
\newblock \emph{IEEE Transactions on Signal Processing}, 72:\penalty0 2939--2952, 2024.
\newblock \doi{10.1109/TSP.2024.3378001}.

\bibitem[Isufi et~al.(2024)Isufi, Gama, Shuman, and Segarra]{IsuGamShu24}
Elvin Isufi, Fernando Gama, David~I Shuman, and Santiago Segarra.
\newblock Graph filters for signal processing and machine learning on graphs.
\newblock \emph{IEEE Transactions on Signal Processing}, 72:\penalty0 4745--4781, 2024.

\bibitem[Hammond et~al.(2011)Hammond, Vandergheynst, and Gribonval]{HamVan11}
David~K Hammond, Pierre Vandergheynst, and R{\'e}mi Gribonval.
\newblock Wavelets on graphs via spectral graph theory.
\newblock \emph{Applied and Computational Harmonic Analysis}, 30\penalty0 (2):\penalty0 129--150, 2011.

\bibitem[Van De~Ville et~al.(2017{\natexlab{a}})Van De~Ville, Demesmaeker, and Preti]{VilDemPre17}
Dimitri Van De~Ville, Robin Demesmaeker, and Maria~Giulia Preti.
\newblock When {S}lepian meets {F}iedler: Putting a focus on the graph spectrum.
\newblock \emph{IEEE Signal Processing Letters}, 24\penalty0 (7):\penalty0 1001--1004, 2017{\natexlab{a}}.

\bibitem[Petrovic et~al.(2019)Petrovic, Bolton, Preti, Li{\'e}geois, and Van De~Ville]{PetBolPre19}
Miljan Petrovic, Thomas~AW Bolton, Maria~Giulia Preti, Rapha{\"e}l Li{\'e}geois, and Dimitri Van De~Ville.
\newblock Guided graph spectral embedding: Application to the {C.} elegans connectome.
\newblock \emph{Network Neuroscience}, 3\penalty0 (3):\penalty0 807--826, 2019.

\bibitem[Slepian and Pollak(1961)]{SlePol61}
David Slepian and Henry~O Pollak.
\newblock Prolate spheroidal wave functions, {F}ourier analysis and uncertainty—i.
\newblock \emph{Bell System Technical Journal}, 40\penalty0 (1):\penalty0 43--63, 1961.

\bibitem[Slepian(1978)]{Sle78}
David Slepian.
\newblock Prolate spheroidal wave functions, {F}ourier analysis, and uncertainty — v: the discrete case.
\newblock \emph{The Bell System Technical Journal}, 57\penalty0 (5):\penalty0 1371--1430, 1978.

\bibitem[Manoj et~al.(2018)Manoj, Chakraborty, and Singh]{ManChaSin18}
BS~Manoj, Abhishek Chakraborty, and Rahul Singh.
\newblock \emph{Complex networks: A networking and signal processing perspective}.
\newblock Prentice Hall, 2018.

\bibitem[Cheung et~al.(2020)Cheung, Shi, Wright, Jiang, Liu, and Moura]{CheShiWri20}
Mark Cheung, John Shi, Oren Wright, Lavendar~Y Jiang, Xujin Liu, and Jos{\'e}~MF Moura.
\newblock Graph signal processing and deep learning: Convolution, pooling, and topology.
\newblock \emph{IEEE Signal Processing Magazine}, 37\penalty0 (6):\penalty0 139--149, 2020.

\bibitem[Defferrard et~al.(2016)Defferrard, Bresson, and Vandergheynst]{DefBreXav16}
Micha\"{e}l Defferrard, Xavier Bresson, and Pierre Vandergheynst.
\newblock Convolutional neural networks on graphs with fast localized spectral filtering.
\newblock In \emph{Advances in Neural Information Processing Systems}, volume~29, 2016.

\bibitem[Ng et~al.(2001)Ng, Jordan, and Weiss]{NgJorWei01}
Andrew Ng, Michael Jordan, and Yair Weiss.
\newblock On spectral clustering: Analysis and an algorithm.
\newblock \emph{Advances in neural information processing systems}, 14, 2001.

\bibitem[Ionescu et~al.(2015)Ionescu, Vantzos, and Sminchisescu]{7410696}
Catalin Ionescu, Orestis Vantzos, and Cristian Sminchisescu.
\newblock Matrix backpropagation for deep networks with structured layers.
\newblock In \emph{2015 IEEE International Conference on Computer Vision (ICCV)}, pages 2965--2973, 2015.

\bibitem[Li et~al.(2024)Li, Shi, Zhao, and Lin]{LiShiZha24}
Tianbo Li, Zekun Shi, Jiaxi Zhao, and Min Lin.
\newblock Amortized eigendecomposition for neural networks.
\newblock \emph{Advances in Neural Information Processing Systems}, 37:\penalty0 138809--138835, 2024.

\bibitem[Shaham et~al.(2018)Shaham, Stanton, Li, Nadler, Basri, and Kluger]{shaham2018spectralnet}
Uri Shaham, Kelly Stanton, Henry Li, Boaz Nadler, Ronen Basri, and Yuval Kluger.
\newblock Spectralnet: Spectral clustering using deep neural networks.
\newblock \emph{arXiv preprint arXiv:1801.01587}, 2018.

\bibitem[Mishne et~al.(2019)Mishne, Shaham, Cloninger, and Cohen]{mishne2019diffusion}
Gal Mishne, Uri Shaham, Alexander Cloninger, and Israel Cohen.
\newblock Diffusion nets.
\newblock \emph{Applied and Computational Harmonic Analysis}, 47\penalty0 (2):\penalty0 259--285, 2019.

\bibitem[Duque et~al.(2022)Duque, Morin, Wolf, and Moon]{duque2022geometry}
Andres~F Duque, Sacha Morin, Guy Wolf, and Kevin~R Moon.
\newblock Geometry regularized autoencoders.
\newblock \emph{IEEE transactions on pattern analysis and machine intelligence}, 45\penalty0 (6):\penalty0 7381--7394, 2022.

\bibitem[Moon et~al.(2019)Moon, Van~Dijk, Wang, Gigante, Burkhardt, Chen, Yim, Elzen, Hirn, Coifman, et~al.]{MooVanWan19}
Kevin~R Moon, David Van~Dijk, Zheng Wang, Scott Gigante, Daniel~B Burkhardt, William~S Chen, Kristina Yim, Antonia van~den Elzen, Matthew~J Hirn, Ronald~R Coifman, et~al.
\newblock Visualizing structure and transitions in high-dimensional biological data.
\newblock \emph{Nature biotechnology}, 37\penalty0 (12):\penalty0 1482--1492, 2019.

\bibitem[Veličković et~al.(2018{\natexlab{a}})Veličković, Cucurull, Casanova, Romero, Liò, and Bengio]{veličković2018graphattentionnetworks}
Petar Veličković, Guillem Cucurull, Arantxa Casanova, Adriana Romero, Pietro Liò, and Yoshua Bengio.
\newblock Graph attention networks, 2018{\natexlab{a}}.
\newblock URL \url{https://arxiv.org/abs/1710.10903}.

\bibitem[Xu et~al.(2019)Xu, Hu, Leskovec, and Jegelka]{xu2019powerfulgraphneuralnetworks}
Keyulu Xu, Weihua Hu, Jure Leskovec, and Stefanie Jegelka.
\newblock How powerful are graph neural networks?, 2019.
\newblock URL \url{https://arxiv.org/abs/1810.00826}.

\bibitem[Hamilton et~al.(2017{\natexlab{a}})Hamilton, Ying, and Leskovec]{NIPS2017_5dd9db5e}
Will Hamilton, Zhitao Ying, and Jure Leskovec.
\newblock Inductive representation learning on large graphs.
\newblock In I.~Guyon, U.~Von Luxburg, S.~Bengio, H.~Wallach, R.~Fergus, S.~Vishwanathan, and R.~Garnett, editors, \emph{Advances in Neural Information Processing Systems}, volume~30. Curran Associates, Inc., 2017{\natexlab{a}}.
\newblock URL \url{https://proceedings.neurips.cc/paper_files/paper/2017/file/5dd9db5e033da9c6fb5ba83c7a7ebea9-Paper.pdf}.

\bibitem[Busch et~al.(2023)Busch, Huang, Benz, Wallenstein, Lajoie, Wolf, Krishnaswamy, and Turk-Browne]{Busch2023}
Erica~L. Busch, Jessie Huang, Andrew Benz, Tom Wallenstein, Guillaume Lajoie, Guy Wolf, Smita Krishnaswamy, and Nicholas~B. Turk-Browne.
\newblock Multi-view manifold learning of human brain-state trajectories.
\newblock \emph{Nature Computational Science}, 3\penalty0 (3):\penalty0 240--253, Mar 2023.
\newblock ISSN 2662-8457.
\newblock URL \url{https://doi.org/10.1038/s43588-023-00419-0}.

\bibitem[Singh et~al.(2025)Singh, Zhang, Bhaskar, Srihari, Tek, Zhang, Noah, Krishnaswamy, and Hirsch]{singh2025}
Rahul Singh, Yanlei Zhang, Dhananjay Bhaskar, Vinod Srihari, Cenk Tek, Xian Zhang, J.~Adam Noah, Smita Krishnaswamy, and Joy Hirsch.
\newblock Deep multimodal representations and classification of first-episode psychosis via live face processing.
\newblock \emph{Frontiers in Psychiatry}, Volume 16 - 2025, 2025.
\newblock ISSN 1664-0640.

\bibitem[Kreyszig(1991)]{kreyszig1991}
E.~Kreyszig.
\newblock \emph{Differential Geometry}.
\newblock Differential Geometry. Dover Publications, 1991.
\newblock ISBN 9780486667218.
\newblock URL \url{https://books.google.ca/books?id=B7yxgFaQKNAC}.

\bibitem[Petrovi{\'c} and Van De~Ville(2019)]{PetVan19}
Miljan Petrovi{\'c} and Dimitri Van De~Ville.
\newblock Slepian guided filtering of graph signals.
\newblock In \emph{Wavelets and Sparsity XVIII}, volume 11138, pages 129--143. SPIE, 2019.

\bibitem[Li{\'e}geois et~al.(2019)Li{\'e}geois, Merad, and Van De~Ville]{LiMerVan19}
Rapha{\"e}l Li{\'e}geois, Ibrahim Merad, and Dimitri Van De~Ville.
\newblock Time-resolved analysis of dynamic graphs: An extended slepian design.
\newblock In \emph{Wavelets and Sparsity XVIII}, volume 11138, pages 315--321. SPIE, 2019.

\bibitem[Bolton et~al.(2018)Bolton, Farouj, Obertino, and Van De~Ville]{BolFarObe18}
Thomas~AW Bolton, Younes Farouj, Silvia Obertino, and Dimitri Van De~Ville.
\newblock Graph slepians to strike a balance between local and global network interactions: Application to functional brain imaging.
\newblock In \emph{2018 IEEE 15th International Symposium on Biomedical Imaging (ISBI 2018)}, pages 1239--1243. IEEE, 2018.

\bibitem[Van De~Ville et~al.(2017{\natexlab{b}})Van De~Ville, Demesmaeker, and Preti]{VanDemPre17_Connectome}
Dimitri Van De~Ville, Robin Demesmaeker, and Maria~Giulia Preti.
\newblock Guiding network analysis using graph slepians: An illustration for the c. elegans connectome.
\newblock In \emph{Wavelets and sparsity xvii}, volume 10394, pages 413--419. SPIE, 2017{\natexlab{b}}.

\bibitem[Xu et~al.(2018{\natexlab{a}})Xu, Shen, Cao, Qiu, and Cheng]{XuHuaQi18}
Bingbing Xu, Huawei Shen, Qi~Cao, Yunqi Qiu, and Xueqi Cheng.
\newblock Graph wavelet neural network.
\newblock In \emph{International Conference on Learning Representations}, 2018{\natexlab{a}}.

\bibitem[Xu et~al.(2022)Xu, Dai, Li, Zou, Xiong, and Frossard]{XuDaiLi22}
Mingxing Xu, Wenrui Dai, Chenglin Li, Junni Zou, Hongkai Xiong, and Pascal Frossard.
\newblock Graph neural networks with lifting-based adaptive graph wavelets.
\newblock \emph{IEEE Transactions on Signal and Information Processing over Networks}, 8:\penalty0 63--77, 2022.

\bibitem[Chew et~al.(2024)Chew, Hirn, Krishnaswamy, Needell, Perlmutter, Steach, Viswanath, and Wu]{CHEW2024101635}
Joyce Chew, Matthew Hirn, Smita Krishnaswamy, Deanna Needell, Michael Perlmutter, Holly Steach, Siddharth Viswanath, and Hau-Tieng Wu.
\newblock Geometric scattering on measure spaces.
\newblock \emph{Applied and Computational Harmonic Analysis}, 70:\penalty0 101635, 2024.
\newblock ISSN 1063-5203.
\newblock \doi{https://doi.org/10.1016/j.acha.2024.101635}.
\newblock URL \url{https://www.sciencedirect.com/science/article/pii/S1063520324000125}.

\bibitem[Gilmer et~al.(2017)Gilmer, Schoenholz, Riley, Vinyals, and Dahl]{GilSamSch17}
Justin Gilmer, Samuel~S. Schoenholz, Patrick~F. Riley, Oriol Vinyals, and George~E. Dahl.
\newblock Neural message passing for quantum chemistry.
\newblock In \emph{ICML}, pages 1263--1272, 2017.

\bibitem[Veličković et~al.(2018{\natexlab{b}})Veličković, Cucurull, Casanova, Romero, Liò, and Bengio]{VelGuiAra18}
Petar Veličković, Guillem Cucurull, Arantxa Casanova, Adriana Romero, Pietro Liò, and Yoshua Bengio.
\newblock Graph attention networks.
\newblock In \emph{International Conference on Learning Representations}, 2018{\natexlab{b}}.

\bibitem[Hamilton et~al.(2017{\natexlab{b}})Hamilton, Ying, and Leskovec]{HamYinLes17}
Will Hamilton, Zhitao Ying, and Jure Leskovec.
\newblock Inductive representation learning on large graphs.
\newblock In \emph{Advances in Neural Information Processing Systems}, volume~30, 2017{\natexlab{b}}.

\bibitem[Xu et~al.(2018{\natexlab{b}})Xu, Hu, Leskovec, and Jegelka]{XuHuLes18}
Keyulu Xu, Weihua Hu, Jure Leskovec, and Stefanie Jegelka.
\newblock How powerful are graph neural networks?
\newblock In \emph{International Conference on Learning Representations}, 2018{\natexlab{b}}.

\bibitem[Alsentzer et~al.(2020)Alsentzer, Finlayson, Li, and Zitnik]{AlsFinLi20}
Emily Alsentzer, Samuel Finlayson, Michelle Li, and Marinka Zitnik.
\newblock Subgraph neural networks.
\newblock \emph{Advances in Neural Information Processing Systems}, 33:\penalty0 8017--8029, 2020.

\bibitem[Luo(2022)]{Luo22}
Yuan Luo.
\newblock {SHINE}: Subhypergraph inductive neural network.
\newblock In \emph{Advances in Neural Information Processing Systems}, volume~35, pages 18779--18792, 2022.

\bibitem[Kim and Oh(2024)]{KimOh24}
Dongkwan Kim and Alice Oh.
\newblock Generalizing {W}eisfeiler-{L}ehman kernels to subgraphs.
\newblock \emph{arXiv preprint arXiv:2412.02181}, 2024.

\bibitem[Shen et~al.(2022)Shen, Yan, Ju, Yi, Lin, and Guan]{SheYanJu22}
Yili Shen, Jiaxu Yan, Cheng-Wei Ju, Jun Yi, Zhou Lin, and Hui Guan.
\newblock Improving subgraph representation learning via multi-view augmentation.
\newblock In \emph{ICML 2022 2nd AI for Science Workshop}, 2022.

\bibitem[Qin et~al.(2024)Qin, Tang, and Lu]{QinTanLu24}
Denggao Qin, Xianghong Tang, and Jianguang Lu.
\newblock Subgraph representation learning with self-attention and free adversarial training.
\newblock \emph{Applied Intelligence}, 54\penalty0 (11):\penalty0 7012--7029, 2024.

\bibitem[Ma et~al.(2021)Ma, Megli, Pittenger, and Pushkarskaya]{ma_ocd_2021}
Xiao Ma, Ashton Megli, Christopher Pittenger, and Helen Pushkarskaya.
\newblock {OCD} {Influences} {Evidence} {Accumulation} {During} {Decision} {Making} in {Males} but {Not} {Females} {During} {Perceptual} and {Value}-{Driven} {Choice}.
\newblock \emph{Frontiers in Psychiatry}, 12, 2021.
\newblock ISSN 1664-0640.

\bibitem[Dadi et~al.(2020)Dadi, Varoquaux, Machlouzarides-Shalit, Gorgolewski, Wassermann, Thirion, and Mensch]{Dadi2020}
Kamalaker Dadi, Gaël Varoquaux, Antonia Machlouzarides-Shalit, Krzysztof~J. Gorgolewski, Demian Wassermann, Bertrand Thirion, and Arthur Mensch.
\newblock Fine-grain atlases of functional modes for f{MRI} analysis.
\newblock \emph{NeuroImage}, 221:\penalty0 117126, 2020.
\newblock ISSN 1053-8119.

\bibitem[Pushkarskaya et~al.(2015)Pushkarskaya, Tolin, Ruderman, Kirshenbaum, Kelly, Pittenger, and Levy]{pushkarskaya2015}
Helen Pushkarskaya, David Tolin, Lital Ruderman, Ariel Kirshenbaum, J.~MacLaren Kelly, Christopher Pittenger, and Ifat Levy.
\newblock Decision-making under uncertainty in obsessive-compulsive disorder.
\newblock \emph{Journal of Psychiatric Research}, 69:\penalty0 166--173, October 2015.
\newblock ISSN 1879-1379.
\newblock \doi{10.1016/j.jpsychires.2015.08.011}.

\bibitem[Levy et~al.(2010)Levy, Snell, Nelson, Rustichini, and Glimcher]{levy_neural_2010}
Ifat Levy, Jason Snell, Amy~J. Nelson, Aldo Rustichini, and Paul~W. Glimcher.
\newblock Neural representation of subjective value under risk and ambiguity.
\newblock \emph{Journal of Neurophysiology}, 103\penalty0 (2):\penalty0 1036--1047, February 2010.
\newblock ISSN 1522-1598.
\newblock \doi{10.1152/jn.00853.2009}.

\bibitem[Di~Martino et~al.(2014)Di~Martino, Yan, Li, Denio, Castellanos, Alaerts, Anderson, Assaf, Bookheimer, Dapretto, Deen, Delmonte, Dinstein, Ertl-Wagner, Fair, Gallagher, Kennedy, Keown, Keysers, Lainhart, Lord, Luna, Menon, Minshew, Monk, Mueller, M{\"u}ller, Nebel, Nigg, O'Hearn, Pelphrey, Peltier, Rudie, Sunaert, Thioux, Tyszka, Uddin, Verhoeven, Wenderoth, Wiggins, Mostofsky, and Milham]{DiMartino2014}
A.~Di~Martino, C.-G. Yan, Q.~Li, E.~Denio, F.~X. Castellanos, K.~Alaerts, J.~S. Anderson, M.~Assaf, S.~Y. Bookheimer, M.~Dapretto, B.~Deen, S.~Delmonte, I.~Dinstein, B.~Ertl-Wagner, D.~A. Fair, L.~Gallagher, D.~P. Kennedy, C.~L. Keown, C.~Keysers, J.~E. Lainhart, C.~Lord, B.~Luna, V.~Menon, N.~J. Minshew, C.~S. Monk, S.~Mueller, R.-A. M{\"u}ller, M.~B. Nebel, J.~T. Nigg, K.~O'Hearn, K.~A. Pelphrey, S.~J. Peltier, J.~D. Rudie, S.~Sunaert, M.~Thioux, J.~M. Tyszka, L.~Q. Uddin, J.~S. Verhoeven, N.~Wenderoth, J.~L. Wiggins, S.~H. Mostofsky, and M.~P. Milham.
\newblock The autism brain imaging data exchange: towards a large-scale evaluation of the intrinsic brain architecture in autism.
\newblock \emph{Molecular Psychiatry}, 19\penalty0 (6):\penalty0 659--667, Jun 2014.
\newblock ISSN 1476-5578.
\newblock \doi{10.1038/mp.2013.78}.
\newblock URL \url{https://doi.org/10.1038/mp.2013.78}.

\bibitem[Bellec et~al.(2010)Bellec, Rosa-Neto, Lyttelton, Benali, and Evans]{BELLEC20101126}
Pierre Bellec, Pedro Rosa-Neto, Oliver~C. Lyttelton, Habib Benali, and Alan~C. Evans.
\newblock Multi-level bootstrap analysis of stable clusters in resting-state f{MRI}.
\newblock \emph{NeuroImage}, 51\penalty0 (3):\penalty0 1126--1139, 2010.
\newblock ISSN 1053-8119.
\newblock URL \url{https://www.sciencedirect.com/science/article/pii/S1053811910002697}.

\bibitem[Chen et~al.(2001)Chen, Petty, Skabardonis, Varaiya, and Jia]{doi:10.3141/1748-12}
Chao Chen, Karl Petty, Alexander Skabardonis, Pravin Varaiya, and Zhanfeng Jia.
\newblock Freeway performance measurement system: Mining loop detector data.
\newblock \emph{Transportation Research Record}, 1748\penalty0 (1):\penalty0 96--102, 2001.
\newblock \doi{10.3141/1748-12}.
\newblock URL \url{https://doi.org/10.3141/1748-12}.

\bibitem[Menzies et~al.(2008)Menzies, Chamberlain, Laird, Thelen, Sahakian, and Bullmore]{MENZIES2008525}
Lara Menzies, Samuel~R. Chamberlain, Angela~R. Laird, Sarah~M. Thelen, Barbara~J. Sahakian, and Ed~T. Bullmore.
\newblock Integrating evidence from neuroimaging and neuropsychological studies of obsessive-compulsive disorder: The orbitofronto-striatal model revisited.
\newblock \emph{Neuroscience {\&} Biobehavioral Reviews}, 32\penalty0 (3):\penalty0 525--549, 2008.
\newblock ISSN 0149-7634.
\newblock \doi{https://doi.org/10.1016/j.neubiorev.2007.09.005}.
\newblock URL \url{https://www.sciencedirect.com/science/article/pii/S0149763407001145}.
\newblock Special section: Neurobiology of Deep Brain Stimulation: Innovations in Treatment and Basal Ganglia Function.

\bibitem[Yang et~al.(2019)Yang, Luo, Zhong, Yang, Yao, Wang, Gao, Liu, Sun, and Li]{10.3389/fpsyt.2019.00452}
Xiangyun Yang, Jia Luo, Zhaoxi Zhong, Xiaojie Yang, Shumin Yao, Pengchong Wang, Jian Gao, Rui Liu, Jing Sun, and Zhanjiang Li.
\newblock Abnormal regional homogeneity in patients with obsessive-compulsive disorder and their unaffected siblings: A resting-state fmri study.
\newblock \emph{Frontiers in Psychiatry}, Volume 10 - 2019, 2019.
\newblock ISSN 1664-0640.
\newblock \doi{10.3389/fpsyt.2019.00452}.
\newblock URL \url{https://www.frontiersin.org/journals/psychiatry/articles/10.3389/fpsyt.2019.00452}.

\bibitem[Loshchilov and Hutter(2019)]{loshchilov2019decoupledweightdecayregularization}
Ilya Loshchilov and Frank Hutter.
\newblock Decoupled weight decay regularization, 2019.
\newblock URL \url{https://arxiv.org/abs/1711.05101}.

\end{thebibliography}

\end{document}